\let\cline\cmidrule
\theoremstyle{thmstyleone}%
\newtheorem{theorem}{Theorem}
\newtheorem{lemma}[theorem]{Lemma}
\newtheorem{corollary}[theorem]{Corollary}
\theoremstyle{thmstyletwo}%
\newtheorem{example}{Example}%
\theoremstyle{thmstylethree}%
\newtheorem{definition}{Definition}%
\begin{document}

\title[Mix-GENEO: A Flexible Filtration for Multiparameter Persistent Homology Detects Digital Images]{Mix-GENEO: A Flexible Filtration for Multiparameter Persistent Homology Detects Digital Images}


\author[1]{\sur{Jiaxing He}}\email{547337872@qq.com}

\author*[1]{\sur{Bingzhe Hou}}\email{houbz@jlu.edu.cn}

\author*[1,2]{\sur{Tieru Wu}}\email{wutr@jlu.edu.cn}

\author[3]{\sur{Yue Xin}}\email{179929393@qq.com}

\affil[1]{\orgdiv{School of Mathematics}, \orgname{Jilin University}, \orgaddress{ \city{Changchun}, \postcode{130012}, \country{P.~R. China}}}

\affil[2]{\orgdiv{School of Artificial Intelligence}, \orgname{Jilin University}, \orgaddress{\city{Changchun}, \postcode{130012}, \country{P.~R. China}}}

\affil[3]{\orgdiv{School of Mathematical Science}, \orgname{Heilongjiang University}, \orgaddress{\city{Harbin}, \postcode{150080}, \country{P.~R. China}}}


\abstract{Two important tasks in the field of Topological Data Analysis are building practical multifiltrations on objects and using TDA to detect the geometry. Motivated by the tasks, we build multiparameter filtrations by operators on images named multi-GENEO, multi-DGENEO and mix-GENEO, and we prove the stability of both the interleaving distance and multiparameter persistence landscape of multi-GENEO with respect to the pseudometric on bounded functions. We also give the estimations of upper bound for multi-DGENEO and mix-GENEO. In practical applications, we regard image as a discrete function space, and then we build multifiltrations on the discrete function space. Finally, we construct comparable experiment on MNIST dataset to demonstrate our bifiltrations are superior to 1-parameter filtrations including lower-star filtration and upper-star filtration. For instance, $6$ and $9$ can be distinguished by our bifiltrations, while they cannot be distinguished by 1-parameter filtrations. The experiment results demonstrate our bifiltrations have ability to detect geometric and topological differences of digital images.}

\keywords{Topological Data Analysis,  Multifiltration, Interleaving Distance,  Multiparameter Persistence Landscape}



\maketitle

\section{Introduction}\label{sec1}

The construction of filtration on images has always been an important issue in Topological and Geometric Data Analysis (TGDA). Currently, the sublevel set filtration to generate 1-parameter persistent homology is widely used. In the Figure \ref{num_3_PD}, we can see the lower-star filtration built on digit $3$  from the MNIST dataset by \cite{Dion-2012} only generate $H_{0}$ barcode $(0,+\infty]$ and $H_{1}$ barcode $(0,255]$, which are two meaningless signatures which only include the trivial messages. The persistence diagram is generated by Persim library \cite{scikittda2019}.  In the paper \cite{Bergo-2019}, the authors define group equivariant non-expansive operators (GENEOs), and in \cite{solomon2024convolutional}, the authors compute persistent homology on images by utilizing convolution operators. Compared to traditional sublevel set filtrations, their methods can improve accuracy to some extent, but our filter can significantly enhance accuracy. By applying specific operators to images, $H_{1}$ persistent homology obtained from the 1-parameter sublevel set filtration can identify the digits 1 and 3. However, this filtration cannot significantly identify the digits $1$ and $3$, $6$ and $9$ in the MNIST dataset by $H_{1}$ persistent homology and no one has  constructed multi-parameter filtrations by operators on images until now, which is a practical and effective approach. In this context, we propose three types of multifiltrations which are named multi-GENEO, multi-DGENEO, and mix-GENEO, which demonstrate superior performance in MNIST digit recognition. 
\begin{figure}[htbp]
	\centering
	\includegraphics[width=0.5\textwidth]{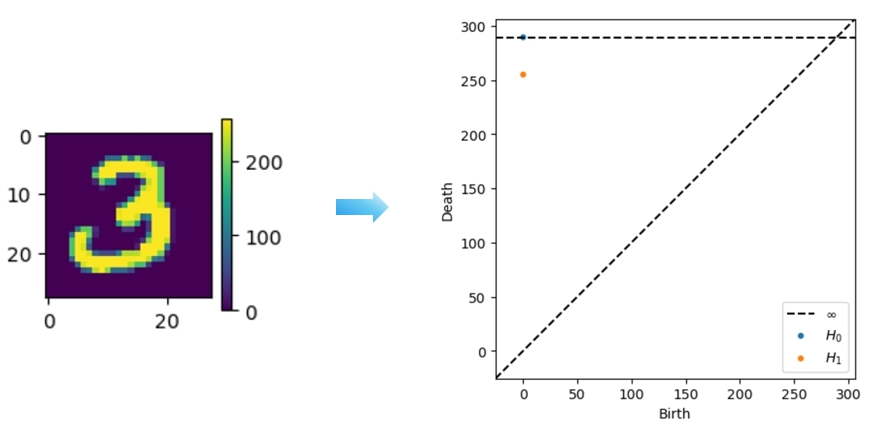}
	\caption{Persistence diagram $H_{0}$ and $H_{1}$ generated by lower-star filtration on the digit 3. }
	\label{num_3_PD}
\end{figure}

In the Figure \ref{num_3_module}, we show the persistent modules of $H_{0}$ and $H_{1}$ obtained by generating the mix-GENEO filtration on the digit 3. The multiparameter persistence module $H_{0}$ and $H_{1}$ provide more information about shapes of the images.

\begin{figure}[htbp]
	\centering
	\includegraphics[width=0.5\textwidth]{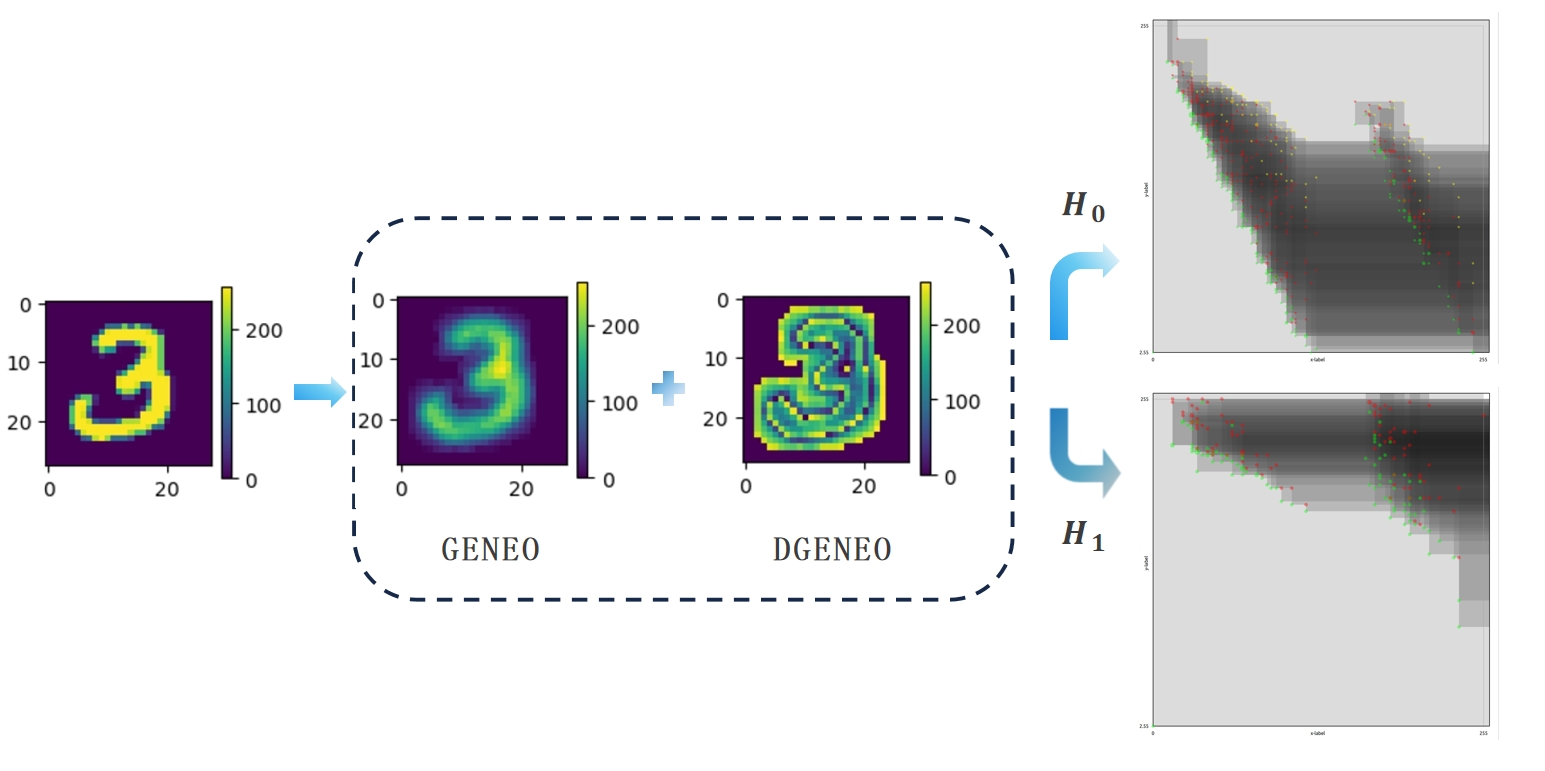}
	\caption{Multiparameter persistence module $H_{0}$ and $H_{1}$ generated by Mix-GENEO filtration on the digit 3. }
	\label{num_3_module}
\end{figure}

\subsection{Overview}
Topological and Geometric Data Analysis (TGDA) describes an emerging method to distinguish topological and geometry features combined with data analysis tools. While the history of TDA (Topological Data Analysis) could date back to the 1990's, the field has been developed rapidly in recent years, which leads to a rich of theoretical foundations \cite{Blum-2023, William-2015, Ulr-2020, Micheal-2015}, high efficient algorithms \cite{David-2006} and software \cite{RIVET-2020, Dion-2012}, and a board range of applications including medicine, ecology, materials science, deep learning and graphics \cite{Blum-2019, Bar-2021, Rick-2019, Hofer-2019, Hofer-2017, love2023topological, hacquard2022topologically}. 

A ubiquitous tool in TGDA is Persistence Homology (PH). The theory of PH studies the homological group of a family of topological spaces and its representation which is called persistence module, see \cite{Leo-2020, Ste-2015} for details. In probability theory, several authors have proposed estimators of fractal dimension defined in terms of minimum spanning trees and higher dimensional persistent homology \cite{Adams-2019, Ben-2020, Bob-2018,Sch-2019}. 

However, a single filtered space can not often adequately capture the structure of interest in the data. This leads one to consider multiparameter persistence. Multiparameter Persistence Homology (MPH) was first considered in \cite{Carls-2009}, in which they studied a multifiltration: a family of spaces parametrized along multiple geometric dimensions. The algebraic invariants of these multifiltrations are called multiparameter persistence modules. Unlike the single persistence, there is not an analogous complete discrete invariant for multiparameter module.

In \cite{Peter-2015} and \cite{Oli-2020}, the authors introduced the stable vectorization of the persistence diagram called persistence landscape and the stable vectorization of the multiparameter persistence module called multiparameter persistence landscape, respectively. Besides, another stable vectorization of the persistence diagram is the persistence image \cite{Adams-2017} which has been shown to produce favourable classification results when combined with machine learning methods. Multiparameter persistence image was introduced in \cite{Math-2020} which is suitable for machine learning and statistical frameworks.

Moreover, by using the geometric features of data extracted by PH and MPH as inputs for statistic techniques, one can provide new insights into the data. Persistence diagram could mark the parameter values for births and deaths of homological features. In a popular point of view, it is said that the long intervals represent the topological signal and the short intervals represent the noise. However, the authors \cite{Peter-2019} proved that persistent homology detects the curvature of disks which shows that the short intervals also encode geometric information. From \cite{Adams-2021}, persistent homology is a mathematically motivated out-of-the-box tool that one can use to summarize not only the global topology but also the local geometry of a wide variety of datasets.

Within the framework of 1-parameter persistent homology, there have been many proposals to build filtrations, including the removal of low density outliers \cite{Carls-2008}, filtering by  density functions and  kernel density functions \cite{Cha-2011, Bob-2017, Phil-2015}, measuring constructions by distances \cite{Anai-2019, Buch-2015, Chaz-2011, Gui-2013}, and subsampling \cite{Blum-2014}. However, there are several disadvantages for 1-parameter persistence. For instance, 1-parameter persistence is only determined by one single parameter, it is unable to distinguish small spatial features and large ones.

Several methods to construct multiparameter filtrations for points have been proposed such as the superlevel-rips bifiltration \cite{Carls-2009}, the multicover bifiltration \cite{Re-2023} and the rhomboid bifiltration \cite{Herb-2021}. These approaches can be found in \cite{Blum-2022} for details. In \cite{Math-2020}, the authors constructed a 2-parameter sublevel filtration from a pair by two images from a piece of human issue of a patient suffering from beast cancer.

\subsection{Motivation}

Many applications of 1-parameter persistent homology concern image analysis, where sublevel filtrations are often used. There is not yet a consensus on what the most natural or useful multifiltrations are for image analysis, but one promising idea is that a second persistence parameter can be used to thicken sublevel or superlevel sets, thereby introducing some sensitivity to the width of features that the ordinary sublevel and superlevel filtrations lack. One construction \cite{Yu-2022} along these lines is a framework to use morphological operations naturally form a multiparameter filtration to denoise. 

We would like to build a multifiltration of digital images to compute multiparameter homology, and then detect significant topological and geometric features from the multiparameter persistent landscape. 

The multiparameter landscape functions are sensitive to homological features of large, medium and small persistence. The landscapes also have the advantage of being interpretable since they are closely related to the rank invariant.

Frequently in topological data analysis, we need to consider several $\mathbb{R}$-valued functions
$$
\gamma_{i}:X\rightarrow \mathbb{R},\ i=1,...,n.
$$
It is equivalent to consider a function $\gamma:X\rightarrow{\mathbb{R}^{n}}$ on a topological space $X$ which gives rise to an $n$-$parameter\ sublevelset\ filtration\ \mathcal{S}(\gamma)$, defined by
$$
\mathcal{S}(\gamma)_{\boldsymbol{a}}=\{\ y\in{X}\ \mid\ \gamma(y)\leq \boldsymbol{a},\  \boldsymbol{a}\in \mathbb{R}^{n}\ \}.
$$

We want to explore the impact of different levels of filtration on multiparameter persistence module. In the paper \cite{Bergo-2019}, the authors defined group equivariant non-expansive operators(GENEOs) whose space is compact and convex with respect to the proper pseudometrics. Based on the stability, they described a simple strategy to select and sample operators and show how the operators can be used to perform machine learning. Also,  they provided a flexible way to select operators. GENEO can be viewed as Gaussian blur, and Laplace operator can be viewed as sharpening which can be thought of the subtraction of two different GENEOS called DOG. Combined with the definition of GENEO, we can use GENEO to construct n-parameter persistent filtrations which are named multi-GENEO, multi-DGENEO and mix-GENEO in the present paper.

To construct n-parameter filtration from a data set, we represent data as function. The following notations are from \cite{Bergo-2019}. Suppose that $X$ be a non-empty set and $\varPhi$ be a topological subspace of all bounded functions from $X$ to $\mathbb{R}$. Obviously, $\varPhi$ is naturally endowed with the topology induced by the distance $	D_{\varPhi}:={\Vert}\varphi_{1}-\varphi_{2}{\Vert}_{\infty} $.

Denote by $Homeo(X)$ the set of all homeomorphisms from X to X. For $g\in{Homeo(X)}$, if for every $\varphi\in{\varPhi}$,  $\varphi\circ{g}\in{\varPhi}$ and $\varphi\circ{g^{-1}}\in{\varPhi}$ , we say $g$ is a $\varPhi$-preserving homeomorphism. Denote by $Homeo_{\varPhi}(X)$ the set of all $\varPhi-$preserving homeomorphisms on X. Let $G$ be the subgroup of $Homeo_{\varPhi}(X)$, the pair  $(\varPhi,G)$ is called a $perception$ $pair$. Let $(\varPhi,G)$, $(\varPsi,H)$ be two perception pairs and $T:G\rightarrow{H}$ be a fixed homomorphism. If each linear operator $F:\varPhi\rightarrow\varPsi$ satisfies $F(\varphi\circ{g})=F(\varphi)\circ{T(g)}$ for every $\varphi \in \varPhi$, $g\in G$ is said to be a $group$ $equivariant$ $operator$  from $(\varPhi,G)$ to $(\varPsi,H)$. Moreover, the definition of GENEO is as follows,

\begin{definition}\cite{Bergo-2019}
	Assume that $(\varPhi,G)$, $(\varPsi,H)$ are two perception pairs and a homomorphism $T:G\rightarrow{H}$ has been fixed. If $F$ is a group equivariant operator from $(\varPhi,G)$ to $(\varPsi,H)$ with respect to $T$ and $F$ is non-expansive(i.e., $D_{\varPsi}(F(\varphi_{1}, F(\varphi_{2}))\leq{D_{\varPhi}(\varphi_{1},\varphi_{2})}$ for every $\varphi_{1},\varphi_{2}\in{\varPhi}$), then $F$ is called a $Group$ $Equivariant$ $Non-Expansive$ $Operator$ $(GENEO)$ associated with $T:G\rightarrow{H}$.
\end{definition}

In this paper, we could define multi-GENEO as follows. 

\begin{definition}\label{mul}
	A multi-GENEO filtration $\{\gamma_{i}\}_{i=1}^{n}$ is a multiparameter filtration defined by $\gamma_{i}=F^{i}(\varphi)$, where $\varphi \in \varPhi$, and each $F^{i}$ is a GENEO, $i=1,...,n$. A multi-DGENEO $\{\gamma_{i}\}_{i=1}^{n}$ is a multiparameter filtration defined by $\gamma_{i}=L^{i}(\varphi)=F^{1,i}(\varphi)-F^{2,i}(\varphi)$ where $\varphi\in \varPhi$, $F^{1,i}$ and $F^{2,i}$ are two elements in GENEO, $i=1,...,n$.
	Moreover, each $\gamma_{i}$ is chosen to be $F^{i}(\varphi)$ or $L^{i}(\varphi)$, we call $\{\gamma_{i}\}_{i=1}^{n}$ is a mix-GENEO.
\end{definition}

\subsection{Contributions}

In this paper we provide a flexible framework to build multiparameter filtrations on digital images. 

$\bullet$ We introduce three methods to build multiparameter filtrations called multi-GENEO, multi-DGENEO and mix-GENEO.

$\bullet$ We show the stability of both interleaving distance and multiparameter persistence landscape of multi-GENEO, and also provide bound estimates for both multi-DGENEO and mix-GENEO with respect to pseudometric for the subset of bounded functions.

$\bullet$ We conduct experiments on MNIST dataset and demonstrate our bifiltrations making sense to identify features of persistence modules by traditional machine learning methods, which  shows the ability of the multiparameter persistence homology to detect geometric and topological differences of digital images.

$\bullet$ We compare the results of lower-star filtration, upper-star filtration, multi-GENEO, multi-DGENEO and mix-GENEO by binary classifications and ten-classifications. In general, mix-GENEO performs the best.

To foster further developments at the intersection of multiparameter persistent homology and machine learning theory, we release our source code under:\url{https://github.com/HeJiaxing-hjx/Mix-GENEO/}.

\section{Background}

In this section, we will introduce some definitions and properties used in this paper. 

Let $\mathbb{Z}$ be the set of integers, $\mathbb{N}$ be the set of non-negative integers and $\mathbb{R}$ be the set of real numbers. Suppose that $\mathbb{K}$ is one of $\mathbb{Z}$, $\mathbb{N}$ and $\mathbb{R}$. For vectors $\boldsymbol{a}$, $\boldsymbol{b}$ in $\mathbb{K}^{n}$, there is a natural partial order on $\mathbb{K}^{n}$ by taking $(a_{1},...,a_{n})\leq (b_{1},...,b_{n})$ if and only if $a_{i} \leq b_{i}$ for all $1 \leq i \leq n$. Denote by $X$ a collection $\{X_{ \boldsymbol{a} } \}_{ \boldsymbol{a} \in \mathbb{R}^{n}}$ and denote by $\pi$ the collection of continuous maps $\pi_{\boldsymbol{a},\boldsymbol{b}}: X_{ \boldsymbol{a}} \rightarrow X_{ \boldsymbol{b}}$ such that the diagram commutes.
\begin{center}
	\begin{tikzcd}
		X_{ \boldsymbol{a}} \arrow{rd}{\pi_{\boldsymbol{a},\boldsymbol{c}}} \arrow{r}{\pi_{\boldsymbol{a},\boldsymbol{b}}} & X_{ \boldsymbol{b}} \arrow{d}{\pi_{\boldsymbol{b},\boldsymbol{c}}}\\
		& X_{ \boldsymbol{c}}
	\end{tikzcd}
\end{center}
 Denote by $\boldsymbol{Top}^{\mathbb{K}^{n}}$ the category whose objects are $(X,\pi)$ and whose morphisms are maps $f:(X,\pi) \rightarrow (Y, \tilde{\pi})$ which is a collection of all continuous maps $\{f_{\boldsymbol{a}}\}$ for all $\boldsymbol{a}\in \mathbb{K}^{n}$ such that $f_{\boldsymbol{a}}:X_{ \boldsymbol{a}} \rightarrow Y_{ \boldsymbol{a}}$ and the diagram commutes.
\begin{center}
	\begin{tikzcd}
		X_{ \boldsymbol{a}} \arrow{d}{f_{\boldsymbol{a}}} \arrow{r}{\pi_{\boldsymbol{a},\boldsymbol{b}}} & X_{ \boldsymbol{b}} \arrow{d}{f_{\boldsymbol{b}}}\\
		Y_{ \boldsymbol{a}} \arrow{r}{\tilde{\pi}_{\boldsymbol{a},\boldsymbol{b}}}         & Y_{ \boldsymbol{b}}
	\end{tikzcd}
\end{center}

\begin{example}
	Denote the sublevelset filtration $X_{\boldsymbol{t}}$  by
	$$
	\mathcal{S}(\gamma)_{\boldsymbol{t}}=\{\ y\in X\ |\ \gamma(y)\leq  \boldsymbol{t}\ \}
	$$ with natural inclusion $\pi_{\boldsymbol{t},\boldsymbol{s}}$, $\boldsymbol{t}\leq \boldsymbol{s} \in \mathbb{R}^{n}$. Then $(X,\pi)$ is one object of $\boldsymbol{Top}^{\mathbb{K}^{n}}$. Similarly, let  $Y_{\boldsymbol{t}}=\mathcal{S}(\gamma\circ f^{-1})_{\boldsymbol{t}}$ with natural inclusion  $\tilde{\pi}_{\boldsymbol{t},\boldsymbol{s}}$, $\boldsymbol{t}\leq \boldsymbol{s} \in \mathbb{R}^{n}$. Then $(Y,\tilde{\pi})$ is also an object in $\boldsymbol{Top}^{\mathbb{K}^{n}}$.  For a homeomorphism $\iota:X\rightarrow Y$, it can induce a morphism  $f:(X,\pi)\rightarrow (Y, \tilde{\pi})$, where $f_{\boldsymbol{t}}:X_{\boldsymbol{t}}\rightarrow Y_{\boldsymbol{t}}$ induced by $f_{t}(x)=\iota(x)$.
\end{example}

Let $M = \oplus_{\boldsymbol{a}\in \mathbb{K}^{n}} M_{\boldsymbol{a}}$, where $M_{\boldsymbol{a}}$ is a module. For any $\boldsymbol{a}\leq \boldsymbol{b}$, there is a homomorphism $\tau_{\boldsymbol{a},\boldsymbol{b}}: M_{\boldsymbol{a}} \rightarrow M_{\boldsymbol{b}}$ such that the following diagram commutes, 
\begin{center}
	\begin{tikzcd}
		M_{ \boldsymbol{a}} \arrow{rd}{\tau_{\boldsymbol{a},\boldsymbol{c}}} \arrow{r}{\tau_{\boldsymbol{a},\boldsymbol{c}}} & M_{ \boldsymbol{b}} \arrow{d}{\tau_{\boldsymbol{b},\boldsymbol{c}}}\\
		& M_{ \boldsymbol{c}}
	\end{tikzcd}
\end{center}
when $\boldsymbol{a}\leq \boldsymbol{b} \leq \boldsymbol{c}$. 

Denote by $\tau$ the collection of $\{\tau_{\boldsymbol{a},\boldsymbol{b}}\}$ for all $\boldsymbol{a} \leq \boldsymbol{b}$. Denote by $M^{\mathbb{K}^{n}}$ the category whose objects are $(M,\tau)$ and whose morphisms are maps  $h:(M,\tau)\rightarrow (N,\tilde{\tau})$ which is a collection of all continuous maps $\{h_{\boldsymbol{a}}\}$ for all $\boldsymbol{a}\in \mathbb{K}^{n}$ such that $h_{\boldsymbol{a}}:M_{ \boldsymbol{a}} \rightarrow N_{ \boldsymbol{a}}$ and the diagram commutes.
\begin{center}
	\begin{tikzcd}
		M_{ \boldsymbol{a}} \arrow{d}{h_{\boldsymbol{a}}} \arrow{r}{\tau_{\boldsymbol{a},\boldsymbol{b}}} & M_{ \boldsymbol{b}} \arrow{d}{h_{\boldsymbol{b}}}\\
		N_{ \boldsymbol{a}} \arrow{r}{\tilde{\tau}_{\boldsymbol{a},\boldsymbol{b}}}         & N_{ \boldsymbol{b}}
	\end{tikzcd}
\end{center}

Notice that homology can be viewed as a functor from $\boldsymbol{Top}^{\mathbb{K}^{n}}$ to $M^{\mathbb{K}^{n}}$. Define the functor $H:\boldsymbol{Top}^{\mathbb{K}^{n}}\rightarrow M^{\mathbb{K}^{n}}$ assigns to each object $(X,\pi)$ in $\boldsymbol{Top}^{\mathbb{K}^{n}}$ the object $(H(X), \pi_{*})$ in $M^{\mathbb{K}^{n}}$ and to each morphism $f\in \textrm{Mor}((X,\pi), (Y, \tilde{\pi}))$ in $\boldsymbol{Top}^{\mathbb{K}^{n}}$ the morphism $f_{*}\in \textrm{Mor}((H(X), \pi_{*}),(H(Y), \tilde{\pi}_{*}))$ in $M^{\mathbb{K}^{n}}$. Notice that $\mathbbm{1}_{*}=\mathbbm{1}$, $(g\circ f)_{*}= g_{*}\circ f_{*}$ and $(\tilde{\pi}_{\boldsymbol{a},\boldsymbol{b}}\circ f)_{*}= \tilde{\pi}_{(\boldsymbol{a},\boldsymbol{b})*}\circ f_{*}$. To see more details about homology theory, we refer to \cite{Hatcher}.

Next, we would like to introduce three pseudometrics $d_{\infty}$, $d_{I}$ and $d_{\lambda}^{(p)}$. Recall that an extended pseudometric on $X$ is a function $d:X\times X \rightarrow [0,\infty]$ with the following three properties:\\
(1)~$d(x,x)=0$, for all $x\in X$.\\
(2)~$d(x,y)=d(y,x)$, for all $x,y\in X$.\\
(3)~$d(x,z)\leq d(x,y) + d(y,z)$, for all $x,y,z\in X$ with $d(x,y), d(y,z)<\infty$.

An extended metric is an extended pseudometric $d$ with the additional property that $d(x,y)\neq{0}$ whenever $x\neq{y}$. In this paper, we refer to extended (pseudo)metrics simply as (pseudo)metrics.

The filtrations  of multi-GENEO we constructed are sublevelset filtrations. Let $\gamma^{X}:X\rightarrow{\mathbb{R}^{n}}$ be a sublevelset filtration function and $\textrm{hom}(\gamma^{X}, \gamma^{Y})$ be the set of all continuous functions $f:X\rightarrow{Y}$ such that $\gamma^{X}(p)\geq{\gamma^{Y}\circ{f(p)}}$ for every $p\in{X}$. We can define an n-parameter sublevelset filtration $S(\gamma)$ of any function $\gamma^{X}$.

For a function $\gamma:X\rightarrow{\mathbb{R}^{n}}$, let
$$
{\Vert}\gamma{\Vert}=\left\{
\begin{array}{rcl}
	\sup_{p\in{X}}{\Vert}\gamma(p){\Vert}_{\infty}       &   &   {\text{if}\   X\   \neq\    \emptyset}\\
	0  \quad \quad \      &   &   {\text{if}\   X\   =\    \emptyset}      
\end{array}
\right.
$$

Given $\gamma^{X}:X\rightarrow{\mathbb{R}^{n}}$ and  $\gamma^{Y}:Y\rightarrow{\mathbb{R}^{n}}$. Let 
$$
d_{\infty}(\gamma^{X}, \gamma^{Y}) = \inf_{h\in{\mathcal{H}}}{\Vert}\gamma^{X}-\gamma^{Y}\circ{h}{\Vert}_{\infty}
$$
where $\mathcal{H}$ is the set of homeomorphisms from $X$ to $Y$.

For $i\geq{0}$, we say that a pseudometric $d$ is $i$-stable for any topological spaces $X$, $Y$ and any functions $\gamma^{X}:X\rightarrow{\mathbb{R}^{n}}$, $\gamma^{Y}:Y\rightarrow{\mathbb{R}^{n}}$, we have
$$
d(H_{i}(\gamma^{X}),H_{i}(\gamma^{Y}))\leq{d_{\infty}(\gamma^{X}, \gamma^{Y})}.
$$
Moreover, we say a pseudometric is stable if it is $i$-stable for all $i\geq 0$.

For $\epsilon\in \mathbb{R}$, let $\vec{\boldsymbol{\epsilon}}\in \mathbb{R}^{n}$ denote the vector whose components are each $\epsilon$. Write $(\cdotp)(\vec{\boldsymbol{\epsilon}}):  M^{\mathbb{R}^{n}}\rightarrow M^{\mathbb{R}^{n}}$ simply as $(\cdotp)(\epsilon)$. Define $\tau_{\boldsymbol{a},\boldsymbol{a}+\vec{\boldsymbol{\epsilon}}}$ to be the $\epsilon$-$transition$ $morphism$ $\varphi_{M}^{\epsilon}: M_{\boldsymbol{a}}\rightarrow  M_{\boldsymbol{a}+\vec{\boldsymbol{\epsilon}}}$  for all $\boldsymbol{a}\in \mathbb{R}^{n}$. Simply write $M(\epsilon)= M_{\boldsymbol{a}+\vec{\boldsymbol{\epsilon}}}$. Two n-modules M and N are said to be $\epsilon$-$interleaved$ if there exist morphisms $f: M\rightarrow N(\epsilon)$ and $g: N\rightarrow M(\epsilon)$ such that 
\begin{align*}
	g(\epsilon)\circ f=\varphi_{M}^{2\epsilon},\ f(\epsilon)\circ g=\varphi_{N}^{2\epsilon}.	
\end{align*}
Here, we call $f$ and $g$ $\epsilon$-$interleaving$ $morphisms$.

Define the interleaving distance $d_{I}:M\times N\rightarrow{[0,\infty)}$ as follows:
\begin{align*}
&d_{I}(M, N)\\
 =& \inf\{\epsilon\in{[0,\infty)}\mid M\ \text{and}\ N\ \text{are}\ \epsilon-\text{interleaved}\}.
\end{align*}
The above $d_{I}$ is the same as the definition in \cite{Micheal-2015}, and the stability of $d_{I}$  is also given in \cite{Micheal-2015}.  

\begin{theorem} [\cite{Micheal-2015}]
	$d_{I}$ is stable.
\end{theorem}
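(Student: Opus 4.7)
The plan is to prove stability by constructing, from any homeomorphism $h:X\to Y$ that nearly intertwines $\gamma^X$ and $\gamma^Y$, an explicit $\epsilon$-interleaving at the level of sublevelset filtrations in $\boldsymbol{Top}^{\mathbb{R}^n}$, then pushing it through the homology functor $H_i$ recalled in the preamble. Taking an infimum at the end drives the interleaving budget down to $d_\infty(\gamma^X,\gamma^Y)$.

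First I would fix $\epsilon > d_\infty(\gamma^X,\gamma^Y)$ and choose a homeomorphism $h:X\to Y$ with $\|\gamma^X-\gamma^Y\circ h\|_\infty \leq \epsilon$ (if no such $h$ exists then $\mathcal{H}=\emptyset$, $d_\infty=\infty$, and there is nothing to prove). The key observation is that if $p\in S(\gamma^X)_{\boldsymbol{a}}$ then $\gamma^Y(h(p))\leq \gamma^X(p)+\vec{\boldsymbol{\epsilon}}\leq \boldsymbol{a}+\vec{\boldsymbol{\epsilon}}$, so $h$ restricts to a continuous map $f_{\boldsymbol{a}}:S(\gamma^X)_{\boldsymbol{a}}\to S(\gamma^Y)_{\boldsymbol{a}+\vec{\boldsymbol{\epsilon}}}$ for every $\boldsymbol{a}\in\mathbb{R}^n$. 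These maps commute with the inclusion transition morphisms $\pi_{\boldsymbol{a},\boldsymbol{b}}$ and $\tilde{\pi}_{\boldsymbol{a}+\vec{\boldsymbol{\epsilon}},\boldsymbol{b}+\vec{\boldsymbol{\epsilon}}}$ since they are all set-theoretic restrictions of the single map $h$, so $f=\{f_{\boldsymbol{a}}\}$ defines a morphism $S(\gamma^X)\to S(\gamma^Y)(\epsilon)$ in $\boldsymbol{Top}^{\mathbb{R}^n}$. Symmetrically, $h^{-1}$ produces a morphism $g:S(\gamma^Y)\to S(\gamma^X)(\epsilon)$.

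Next I would verify the interleaving identities. At the topological level, $g_{\boldsymbol{a}+\vec{\boldsymbol{\epsilon}}}\circ f_{\boldsymbol{a}}$ sends $p$ to $h^{-1}(h(p))=p$, which is precisely the inclusion $\pi_{\boldsymbol{a},\boldsymbol{a}+2\vec{\boldsymbol{\epsilon}}}$; symmetrically for $f_{\boldsymbol{a}+\vec{\boldsymbol{\epsilon}}}\circ g_{\boldsymbol{a}}$. Applying the functor $H_i:\boldsymbol{Top}^{\mathbb{R}^n}\to M^{\mathbb{R}^n}$, which preserves identities and composition as noted in the preamble, yields module morphisms $f_*$ and $g_*$ with $g_*(\epsilon)\circ f_*=\varphi^{2\epsilon}_{H_i(\gamma^X)}$ and $f_*(\epsilon)\circ g_*=\varphi^{2\epsilon}_{H_i(\gamma^Y)}$. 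Thus $H_i(\gamma^X)$ and $H_i(\gamma^Y)$ are $\epsilon$-interleaved, so $d_I(H_i(\gamma^X),H_i(\gamma^Y))\leq \epsilon$; letting $\epsilon\searrow d_\infty(\gamma^X,\gamma^Y)$ finishes the proof.

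The main obstacle I anticipate is bookkeeping rather than conceptual: one must track the shift $(\epsilon)$ consistently, confirm that every square commutes in both $\boldsymbol{Top}^{\mathbb{R}^n}$ and $M^{\mathbb{R}^n}$, and handle the case where the infimum defining $d_\infty$ is not attained by passing to a minimizing sequence $\{h_n\}$ of homeomorphisms so that the infimum defining $d_I$ absorbs the approximation. Since the theorem is attributed to \cite{Micheal-2015}, the fully verified categorical version can also simply be cited.
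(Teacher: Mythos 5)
The paper does not prove this theorem; it only cites \cite{Micheal-2015}, so there is no internal proof to compare against. Your argument is the standard one for the (soft) stability of the interleaving distance of sublevelset filtrations — restrict a near-intertwining homeomorphism $h$ and its inverse to the sublevel sets to get an $\epsilon$-interleaving in $\boldsymbol{Top}^{\mathbb{R}^n}$, push it through the homology functor, and take the infimum over $\epsilon$ — and it is correct; this is essentially the proof given in the cited reference.
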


Multiparameter persistence landscape proposed in \cite{Oli-2020} is a stable representation with respect to the interleaving distance and persistence weighted Wasserstein distance. The author also provided examples and statistical tests to demonstrate a range of potential applications which is convenient to be used.

Let $M\in{M^{\mathbb{R}^{n}}}$. Consider the function  $\lambda:\mathbb{N}\times\mathbb{R}^{n}\rightarrow \mathbb{R}$,
\begin{align*}
\lambda(k,\boldsymbol{x})= \sup\{\varepsilon\geq{0}: \ &\beta^{\boldsymbol{x-h},\boldsymbol{x+h}}\geq{k} \ \text{for all} \\
& \boldsymbol{h}\geq\boldsymbol{0}\ \text{with}\ {\Vert}\boldsymbol{h}{\Vert}_{\infty}\leq\epsilon \},
\end{align*}
where $\beta^{\boldsymbol{a},\boldsymbol{b}}=\textrm{dim}(\textrm{Im}(M_{a}\rightarrow{M_{b}}))$ is considered as the corresponding Betti number for $\boldsymbol{a}\leq\boldsymbol{b}$.  The multiparameter persistence landscape of $M$ is the set of such function $\lambda(k,\boldsymbol{x})$ which describes the maximal radius over which k features persist in every (positive) direction through $\boldsymbol{x}$ in the parameter space.

Let $M$, $N$ be multiparameter persistence modules. The $p$-$landscape$ distance $d_{\lambda}^{(p)}(M,N)$ between $M$ and $N$  is defined by,
$$
d_{\lambda}^{(p)}(M,N)={\Vert}\lambda(M)-\lambda(N){\Vert}_{p}
$$
where ${\Vert}\cdot{\Vert}$ is $L^{p}\text{-}norm$ for the $\mathbb{R}$-valued functions on $\mathbb{N}\times\mathbb{R}^{n}$.

\begin{theorem}[\cite{Oli-2020}]
	Let $M$, $N\in{M^{\mathbb{R}^{n}}}$ be multiparameter persistence modules, then the $\infty-landscape$ distance of the multiparameter persistence landscapes is bounded by the interleaving distance $d_{I}$, i.e.
	\begin{equation*}
		d_{\lambda}^{(\infty)}(M,N) \leq d_{I}(M,N).
	\end{equation*}
\end{theorem}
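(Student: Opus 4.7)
The plan is to fix any $\epsilon > 0$ such that $M$ and $N$ are $\epsilon$-interleaved via morphisms $f : M \to N(\epsilon)$ and $g : N \to M(\epsilon)$, establish the pointwise bound $|\lambda_M(k,\boldsymbol{x}) - \lambda_N(k,\boldsymbol{x})| \leq \epsilon$ for every $k \in \mathbb{N}$ and $\boldsymbol{x} \in \mathbb{R}^n$, and then pass to the infimum over such $\epsilon$. Since $d_\lambda^{(\infty)}(M,N) = \|\lambda(M) - \lambda(N)\|_\infty$ and $d_I(M,N)$ is the infimum of the admissible $\epsilon$'s, this uniform estimate delivers the theorem directly.

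The core ingredient I would prove first is the Betti number comparison
\begin{equation*}
\beta_N^{\boldsymbol{x}-\boldsymbol{h}-\vec{\boldsymbol{\epsilon}},\, \boldsymbol{x}+\boldsymbol{h}+\vec{\boldsymbol{\epsilon}}} \;\leq\; \beta_M^{\boldsymbol{x}-\boldsymbol{h},\, \boldsymbol{x}+\boldsymbol{h}}
\end{equation*}
for every $\boldsymbol{h} \geq \boldsymbol{0}$, together with its symmetric counterpart. This reduces to the factorization
\begin{equation*}
\varphi_N^{\boldsymbol{x}-\boldsymbol{h}-\vec{\boldsymbol{\epsilon}} \to \boldsymbol{x}+\boldsymbol{h}+\vec{\boldsymbol{\epsilon}}} \;=\; f_{\boldsymbol{x}+\boldsymbol{h}} \circ \varphi_M^{\boldsymbol{x}-\boldsymbol{h} \to \boldsymbol{x}+\boldsymbol{h}} \circ g_{\boldsymbol{x}-\boldsymbol{h}-\vec{\boldsymbol{\epsilon}}},
\end{equation*}
which I would obtain by first inserting the interleaving identity $\varphi_N^{2\epsilon} = f(\epsilon)\circ g$ at the right endpoint, then sliding $g$ past a structure map of $N$ using its naturality as a module morphism. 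Taking ranks and using the elementary fact that the rank of a composition is bounded by the rank of any factor then yields the inequality.

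With this comparison in hand, I would close the argument as follows. Fix $k$ and $\boldsymbol{x}$, pick any $r < \lambda_N(k, \boldsymbol{x})$, and take an arbitrary $\boldsymbol{h}' \geq \boldsymbol{0}$ with $\|\boldsymbol{h}'\|_\infty \leq r - \epsilon$. Setting $\boldsymbol{h} := \boldsymbol{h}' + \vec{\boldsymbol{\epsilon}}$ gives $\|\boldsymbol{h}\|_\infty \leq r$, hence $\beta_N^{\boldsymbol{x}-\boldsymbol{h},\boldsymbol{x}+\boldsymbol{h}} \geq k$ by the definition of $\lambda_N$; the Betti comparison then forces $\beta_M^{\boldsymbol{x}-\boldsymbol{h}', \boldsymbol{x}+\boldsymbol{h}'} \geq k$. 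As $\boldsymbol{h}'$ was arbitrary, $\lambda_M(k,\boldsymbol{x}) \geq r - \epsilon$; letting $r \nearrow \lambda_N(k,\boldsymbol{x})$ and invoking the symmetric bound produces the desired pointwise estimate.

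The main obstacle, and the step that deserves the most care, is the factorization identity itself: one must simultaneously juggle the two interleaving triangle relations and the naturality squares of $f$ and $g$ in $M^{\mathbb{R}^n}$ while tracking consistent $\vec{\boldsymbol{\epsilon}}$-shifts at every vertex, since any misalignment of indices collapses the rank bound. Once this categorical bookkeeping is pinned down, the transfer from Betti numbers to landscapes and the subsequent infimum over interleaving parameters are routine.
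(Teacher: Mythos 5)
The paper does not prove this statement---it is quoted from \cite{Oli-2020} as a known result---so there is no internal proof to compare against; measured against the standard argument in that reference, your proposal is correct and follows essentially the same route. The factorization $\varphi_N^{\boldsymbol{x}-\boldsymbol{h}-\vec{\boldsymbol{\epsilon}}\to\boldsymbol{x}+\boldsymbol{h}+\vec{\boldsymbol{\epsilon}}}=f_{\boldsymbol{x}+\boldsymbol{h}}\circ\varphi_M^{\boldsymbol{x}-\boldsymbol{h}\to\boldsymbol{x}+\boldsymbol{h}}\circ g_{\boldsymbol{x}-\boldsymbol{h}-\vec{\boldsymbol{\epsilon}}}$ does follow from one triangle identity plus naturality as you describe, the rank inequality and the $\boldsymbol{h}\mapsto\boldsymbol{h}'+\vec{\boldsymbol{\epsilon}}$ substitution give the pointwise bound $|\lambda_M(k,\boldsymbol{x})-\lambda_N(k,\boldsymbol{x})|\leq\epsilon$ (including the degenerate case where the defining set for $\lambda_N$ is empty, which your symmetric bound handles), and taking the supremum over $(k,\boldsymbol{x})$ followed by the infimum over interleaving parameters yields the theorem.
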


We would like to introduce lower-star filtration and upper-star filtration, which are both 1-parameter filtrations. Let $K$ be a triangulation of a compact 2-manifold without boundary $\mathbb{M}$. Let $h:\mathbb{M}\rightarrow \mathbb{R}$ be a function that is linear on every triangle. The function is defined, consequently, by its value at the vertices of $K$. We will assume that $h(u)\neq h(v)$ for all vertices $u\neq v\in K$. It is common to refer to h as the height function. In a simplicial complex, the natural concept of a neighborhood of a vertex $u$ is the $star$, $\text{St}u$, that consists of $u$ together with the edges and triangles that share $u$ as a vertex. Since all vertices have different heights, each edge and triangle has a unique lowest and a unique highest vertex. We can partition the simplicies of the star into lower and upper stars,
\begin{definition}[\cite{zomorodian2005topology}]
	The $lower$ $star$ $\b{\rm{St}}u$ and $upper$ $star$ $\bar{\rm{St}}u$ of vertex $u$ for the height function $h$ are
	\begin{align*}
		\b{\rm{St}}u = \{\sigma\in \text{Stu}~|~h(v)\leq h(u),\forall \text{vertices}~ v\in \sigma\},
	\end{align*}
	\begin{align*}
		\bar{\rm{St}}u = \{\sigma\in \text{Stu}~|~h(v)\geq h(u),\forall \text{vertices}~ v\in \sigma\},
	\end{align*}
\end{definition} 
These subsets of the star contain the simplices that have $u$ as their highest or their lowest vertex, respectively. And we may partition $K$ into a collection of either lower or upper stars, $K=\cup_{u}\b{\rm{St}}u = \cup_{u}\bar{\rm{St}}u$. Each partition give us a filtration. Suppose we sort the $n$ vertices of $K$ in order of increasing height to get the sequence $u^{1}$, $u^{2}$, $\ldots$, $u^{n}$, $h(u^{i})< h(u^{j})$, for all $1\leq i < j \leq n$. We then let $K^{i}$ be the union of the first $i$ lower stars, $K^{i} = \cup_{1\leq i < j \leq n}\b{\rm{St}}u^{j}$. Each simplex $\sigma$ has an associated vertex $u^{i}$, and we call the height of that vertex the $birth$ $time$ $h(\sigma)=h(u^{i})$ of $\sigma$. The subcomplex $K^{i}$ of $K$ consists of the $i$ lowest vertices together with all edges and triangles connecting them. Clearly, the sequence $K^{i}$ defines a filtration of $K$. We may define another filtration by sorting in decreasing order and using upper stars.

\section{Stability and Representation}

In this section, we will show the stability and the bound estimates with respect to both the interleaving distance and multiparameter persistence landscape of multi-GENEO,  multi-DGENEO and mix-GENEO persistence module. we will also show the filtrations of multi-GENEO, multi-DGENEO and mix-GENEO on discrete function space.

\subsection{Stability for Multi-GENEO}
Consider $F$ as an element in the n copies of GENEO written as $F=(F_{1}, F_{2}, ..., F_{n})\in{\oplus_{i=1}^{n}GENEO}$.

\begin{theorem}\label{mul-G}
	Suppose that $X$ is a non-empty space, $\varphi_{k} \in{\varPhi}$ are the bounded functions on X for $k=1,2$, and $F=(F^{1}, F^{2}, ..., F^{n})$ is an element in the n copies of GENEO. Let $V(F(\varphi_{k}))$ be the multiparameter persistence module of multi-GENEO. The filtration of multi-GENEO for $1\leq{i}\leq{n}$ can be obtained written as $F(\varphi_{k})$, and then
	\begin{align*}
		&\sup_{F}d_{\lambda}^{(\infty)}(V(F(\varphi_{1})),V(F(\varphi_{2}))) \\
		\leq& \sup_{F}d_{I}(V(F(\varphi_{1})),V(F(\varphi_{2})))\\ \leq& \inf_{g\in{G}}{\Vert}\varphi_{1}-\varphi_{2}\circ{g}{\Vert}_{\infty}\\ \leq&  D_{\varPhi}(\varphi_{1}, \varphi_{2}),
	\end{align*}
	where $G$ is a subgroup of $\textrm{Homeo}_{\varPhi}(X)$.
\end{theorem}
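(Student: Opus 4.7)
The plan is to split the chain of inequalities into three pieces and to recognize each as an immediate consequence of a known stability result together with the two defining axioms of a GENEO, so that essentially no original computation is needed.

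First I would handle the two outer inequalities. For any fixed $F$, the bound $d_{\lambda}^{(\infty)}(M,N)\leq d_{I}(M,N)$ from the cited Oli-2020 result, applied to $M=V(F(\varphi_{1}))$ and $N=V(F(\varphi_{2}))$, yields the first inequality after taking $\sup_{F}$ on both sides. The last inequality follows from evaluating $\inf_{g\in G}\|\varphi_{1}-\varphi_{2}\circ g\|_{\infty}$ at the identity $\mathbbm{1}\in G$, since $\varphi_{2}\circ\mathbbm{1}=\varphi_{2}$ and $\|\varphi_{1}-\varphi_{2}\|_{\infty}=D_{\varPhi}(\varphi_{1},\varphi_{2})$ by the definition of the pseudometric on $\varPhi$.

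The middle inequality is where the GENEO structure enters. For any fixed $F=(F^{1},\ldots,F^{n})$, view $F(\varphi_{k})$ as the $\mathbb{R}^{n}$-valued function $p\mapsto(F^{1}(\varphi_{k})(p),\ldots,F^{n}(\varphi_{k})(p))$ and consider the induced sublevelset filtration; the multiparameter stability of $d_{I}$ then gives $d_{I}(V(F(\varphi_{1})),V(F(\varphi_{2})))\leq d_{\infty}(F(\varphi_{1}),F(\varphi_{2}))$. Next, fix any $g\in G$. Using the common homomorphism $T\colon G\to H$ associated to the $F^{i}$, GENEO equivariance produces $F(\varphi_{2})\circ T(g)=F(\varphi_{2}\circ g)$ componentwise, so taking $h=T(g)\in\mathrm{Homeo}(X)$ in the infimum defining $d_{\infty}$ gives $d_{\infty}(F(\varphi_{1}),F(\varphi_{2}))\leq\|F(\varphi_{1})-F(\varphi_{2}\circ g)\|_{\infty}$. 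GENEO non-expansiveness applied componentwise, together with the fact that the $\ell^{\infty}$ norm on $\mathbb{R}^{n}$-valued functions is the maximum of the componentwise sup norms, then bounds this by $\|\varphi_{1}-\varphi_{2}\circ g\|_{\infty}$. Taking $\inf_{g\in G}$ on the right, and observing that the resulting bound no longer depends on $F$, lets us take $\sup_{F}$ on the left, which finishes the middle inequality.

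The main obstacle, and the only place where care is required, is treating the multi-component equivariance as a single reparameterization: a priori each $F^{i}$ may carry its own homomorphism $T^{i}\colon G\to H^{i}$, while the $d_{\infty}$ estimate demands one homeomorphism $h\in\mathrm{Homeo}(X)$ implementing the reparameterization for every coordinate simultaneously. In the image setting of the paper the spaces $\varPsi^{i}$ share the underlying domain $X$ and the $T^{i}$ all coincide with a single natural geometric action $T$, so $h=T(g)$ works for every coordinate; once that compatibility is in place, the three inequalities chain together with no further computation.
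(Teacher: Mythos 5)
Your proposal is correct and follows essentially the same route as the paper: landscape stability for the first inequality, the chain ``interleaving stability $+$ GENEO equivariance via $T(g)$ $+$ non-expansiveness'' for the middle one, and evaluation at the identity for the last. The only cosmetic difference is that you fold the reparameterization by $T(g)$ into the infimum defining $d_{\infty}$, whereas the paper first invokes invariance of the persistence module under $\textrm{Homeo}_{\varPhi}(X)$ and then applies stability with the identity homeomorphism; your explicit remark that all components must share a single homomorphism $T$ is a point the paper leaves implicit.
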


\begin{proof}	
	For every $F\in{\oplus_{i=1}^{n}GENEO}$, every $g\in{G}$ and $\varphi_{1}$, $\varphi_{2}\in{\varPhi}$, we have that
	\begin{align*}
		&d_{I}(V(F(\varphi_{1})),V(F(\varphi_{2})))\\
		=&d_{I}(V(F^{1}(\varphi_{1}),...,F^{n}(\varphi_{1})), \\
		&\quad V(F^{1}(\varphi_{2}),...,F^{n}(\varphi_{2})))\\ 
		=&d_{I}(V(F^{1}(\varphi_{1}),...,F^{n}(\varphi_{1})), \\
		&\quad   V(F^{1}(\varphi_{2})\circ T(g),...,F^{n}(\varphi_{2})\circ T(g)))\\
		=&d_{I}(V(F^{1}(\varphi_{1}),...,F^{n}(\varphi_{1})), \\
		&\quad   V(F^{1}(\varphi_{2}\circ g),...,F^{n}(\varphi_{2}\circ g)))\\
		\leq&D_{\varPsi}(V(F^{1}(\varphi_{1}),...,F^{n}(\varphi_{1})), \\
		&\quad   V(F^{1}(\varphi_{2}\circ g),...,F^{n}(\varphi_{2}\circ g)))\\
		=&{\Vert}(F^{1}(\varphi_{1}-\varphi_{2}\circ g), ...,F^{n}(\varphi_{1}-\varphi_{2}\circ g)){\Vert}_{\infty}\\
		=&\mathop{max}\limits_{i}{\Vert}F^{i}(\varphi_{1}-\varphi_{2}\circ g){\Vert}_{\infty}\\
		\leq&{\Vert}\varphi_{1}-\varphi_{2}\circ{g}{\Vert}_{\infty}\\
		=&D_{\varPhi}(\varphi_{1},\varphi_{2}\circ{g}).
	\end{align*}
	The second equality follows from the invariance of multiparameter persistent homology under action of $\textrm{Homeo}_{\varPhi}(X)$, the third equality and the seventh inequality follow from that each $F^{i}$ is a GENEO. The fourth inequality follows from the stability of multiparameter persistent homology while the sixth equality follows from the definition of the metric ${\Vert}\cdot{\Vert}_{\infty}$.
	Since $\varphi_{1}$, $\varphi_{2}$, $g$ are arbitrary chosen and $F$ is an element in the n copies of GENEO, we get
	\begin{align*}
		&\sup_{F}d_{I}(V(F(\varphi_{1})),V(F(\varphi_{2})))\\
		\leq& \inf_{g\in{G}}{\Vert}\varphi_{1}-\varphi_{2}\circ{g}{\Vert}_{\infty}\\
		\leq&  D_{\varPhi}(\varphi_{1},\varphi_{2}).
	\end{align*}
	Furthermore, we have that 
	\begin{align*}
		&\sup_{F}d_{\lambda}^{(\infty)}(V(F(\varphi_{1})),V(F(\varphi_{2})))\\ \leq& \sup_{F}d_{I}(V(F(\varphi_{1})),V(F(\varphi_{2})))\\ \leq& \inf_{g\in{G}}{\Vert}\varphi_{1}-\varphi_{2}\circ{g}{\Vert}_{\infty}\\ \leq & D_{\varPhi}(\varphi_{1},\varphi_{2}).
	\end{align*}
	Then we obtain the stability of the $\infty$-landscape distance of the multiparameter persistence landscapes.
	
\end{proof}

\begin{lemma}\label{mul-L}
	Suppose that $X$ is a non-empty space, $\varphi_{k} \in{\varPhi}$ are the bounded functions on X for $k=1,2$, and $L^{i}(\varphi)=F^{1,i}(\varphi)-F^{2,i}(\varphi)$, for which  $F^{1,i}(\varphi)$ and $F^{2,i}(\varphi) $ are two elments in the GENEOs, $i=1,...,n$.
	Let $L=(L^{1}, L^{2}, ..., L^{n})$ and $V(L(\varphi_{k}))$ be the multiparameter persistence module of multi-DGENEO. The filtration of multi-DGENEO for $1\leq{i}\leq{n}$ can be obtained written as $L^{i}(\varphi_{k})$, and then
	\begin{align*}
		&\sup_{L}d_{\lambda}^{(\infty)}(V(L(\varphi_{1})),V(L(\varphi_{2}))) \\
		\leq& 	\sup_{L}d_{I}(V(L(\varphi_{1})),V(L(\varphi_{2})))\\ \leq& 2\inf_{g\in{G}}{\Vert}\varphi_{1}-\varphi_{2}\circ{g}{\Vert}_{\infty} \\ \leq& 2D_{\varPhi}(\varphi_{1},\varphi_{2}),
	\end{align*}
	where $G$ is a subgroup of $Homeo_{\varPhi}(X)$.
\end{lemma}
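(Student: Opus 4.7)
The plan is to mirror the proof of Theorem \ref{mul-G}, adapting the key step where GENEO non-expansiveness is invoked, since now each coordinate of the filtration is a difference of two GENEOs rather than a single one.

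First I would verify that each $L^{i}$ still enjoys a group-equivariance property with respect to $T$, even though it is no longer non-expansive. Indeed, since both $F^{1,i}$ and $F^{2,i}$ are GENEOs associated with the same homomorphism $T:G\rightarrow H$, for any $g\in G$ we have
\begin{align*}
L^{i}(\varphi\circ g) &= F^{1,i}(\varphi\circ g)-F^{2,i}(\varphi\circ g)\\
&= F^{1,i}(\varphi)\circ T(g)-F^{2,i}(\varphi)\circ T(g)\\
&= L^{i}(\varphi)\circ T(g).
\end{align*}
This invariance lets me replace $V(L(\varphi_{2}))$ by $V(L(\varphi_{2}\circ g))$ inside $d_{I}$ for any $g\in G$, exactly as in the proof of Theorem \ref{mul-G}.

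Next I would apply the stability of the interleaving distance (Theorem of Lesnick) to get
\[
d_{I}\bigl(V(L(\varphi_{1})),V(L(\varphi_{2}\circ g))\bigr)\leq\max_{1\leq i\leq n}\|L^{i}(\varphi_{1})-L^{i}(\varphi_{2}\circ g)\|_{\infty}.
\]
Here comes the only substantive departure from the multi-GENEO proof: instead of directly invoking non-expansiveness, I would split the difference by the triangle inequality,
\begin{align*}
&\|L^{i}(\varphi_{1})-L^{i}(\varphi_{2}\circ g)\|_{\infty}\\
\leq{} & \|F^{1,i}(\varphi_{1})-F^{1,i}(\varphi_{2}\circ g)\|_{\infty}\\
&+\|F^{2,i}(\varphi_{1})-F^{2,i}(\varphi_{2}\circ g)\|_{\infty},
\end{align*}
and then apply non-expansiveness to each GENEO separately. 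Each of the two summands is bounded by $\|\varphi_{1}-\varphi_{2}\circ g\|_{\infty}$, producing the factor of $2$ that appears in the statement.

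Finally I would take the infimum over $g\in G$ and the supremum over $L\in\oplus_{i=1}^{n}(\mathrm{GENEO}-\mathrm{GENEO})$ to obtain the middle two inequalities, bound the infimum by $D_{\varPhi}(\varphi_{1},\varphi_{2})$ using $g=\mathrm{id}$, and then attach the landscape bound $d_{\lambda}^{(\infty)}\leq d_{I}$ on the left to close the chain. I do not anticipate a real obstacle here; the only point requiring care is that the difference operator $L^{i}$ is not itself a GENEO (it loses non-expansiveness with constant $1$), which is precisely what forces the factor of $2$ and which the triangle-inequality step handles cleanly.
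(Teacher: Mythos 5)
Your proposal is correct and follows essentially the same route as the paper's proof: both reduce to the Theorem~\ref{mul-G} argument (equivariance of $L^{i}$ to insert $\varphi_{2}\circ g$, stability of $d_{I}$, then the landscape bound), and both obtain the factor of $2$ by splitting $L^{i}$ into its two GENEO constituents and applying non-expansiveness to each. The only cosmetic difference is that the paper writes the key bound as $\|L^{i}(\varphi_{1}-\varphi_{2}\circ g)\|_{\infty}\leq 2\|\varphi_{1}-\varphi_{2}\circ g\|_{\infty}$ using linearity of the operators, whereas you split $\|L^{i}(\varphi_{1})-L^{i}(\varphi_{2}\circ g)\|_{\infty}$ by the triangle inequality directly, which amounts to the same estimate.
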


\begin{proof}	
	As the same of the calculation in Theorem \ref{mul-G}, we have 
	\begin{align*}
		&d_{I}(V(L(\varphi_{1})),V(L(\varphi_{2})))\\	
		\leq&D_{\varPsi}(V(L^{1}(\varphi_{1}),...,L^{n}(\varphi_{1})), \\
		&\quad \ \ V(L^{1}(\varphi_{2}\circ g),...,L^{n}(\varphi_{2}\circ g)))\\
		=&\max\limits_{i}{\Vert}L^{i}(\varphi_{1}-\varphi_{2}\circ g){\Vert}_{\infty}\\
		\leq&2{\Vert}\varphi_{1}-\varphi_{2}\circ{g}{\Vert}_{\infty}\\
		=&2D_{\varPhi}(\varphi_{1},\varphi_{2}\circ{g}).
	\end{align*}
	Since $\varphi_{1}$, $\varphi_{2}$, $g$ are arbitrary chosen and $F$ is considered as an element in the n copies of GENEO, the conclusion is obtained.
\end{proof}

\begin{corollary}\label{mul-M}
	Let $V(M(\varphi_{k}))$ be the multiparameter persistence module of mix-GENEO, $k=1,2$. Then
	\begin{align*}
		&\sup_{M}d_{\lambda}^{(\infty)}(V(M(\varphi_{1})),V(M(\varphi_{2}))) \\
		\leq& \sup_{M}d_{I}(V(M(\varphi_{1})),V(M(\varphi_{2})))\\ \leq& 2\inf_{g\in{G}}{\Vert}\varphi_{1}-\varphi_{2}\circ{g}{\Vert}_{\infty}\\ \leq&  2D_{\varPhi}(\varphi_{1},\varphi_{2}).
	\end{align*}
\end{corollary}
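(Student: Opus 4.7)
The plan is to reduce the corollary to the two preceding results by exploiting the fact that each coordinate of a mix-GENEO is either a GENEO or a DGENEO, so each coordinate already admits a Lipschitz bound (with constant $1$ or $2$ respectively) against $\|\varphi_{1}-\varphi_{2}\circ g\|_{\infty}$. Taking the max over coordinates then produces the factor $2$ uniformly.

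More concretely, I would begin by writing $M=(M^{1},\ldots,M^{n})$ where each $M^{i}$ equals either some GENEO $F^{i}$ or some DGENEO $L^{i}=F^{1,i}-F^{2,i}$, in accordance with Definition \ref{mul}. Following the same chain of (in)equalities used in the proof of Theorem \ref{mul-G}, invariance of multiparameter persistent homology under $\mathrm{Homeo}_{\varPhi}(X)$ and stability give
\begin{align*}
&d_{I}(V(M(\varphi_{1})),V(M(\varphi_{2})))\\
\leq&\ D_{\varPsi}(V(M^{1}(\varphi_{1}),\ldots,M^{n}(\varphi_{1})),\\
&\qquad V(M^{1}(\varphi_{2}\circ g),\ldots,M^{n}(\varphi_{2}\circ g)))\\
=&\ \max_{i}\|M^{i}(\varphi_{1}-\varphi_{2}\circ g)\|_{\infty}.
\end{align*}
For each coordinate $i$ with $M^{i}=F^{i}$ a GENEO, non-expansiveness yields $\|M^{i}(\varphi_{1}-\varphi_{2}\circ g)\|_{\infty}\leq \|\varphi_{1}-\varphi_{2}\circ g\|_{\infty}$, and for each coordinate with $M^{i}=L^{i}$ a DGENEO the triangle inequality together with the non-expansiveness of $F^{1,i}$ and $F^{2,i}$ yields the same bound with an extra factor of $2$, exactly as in Lemma \ref{mul-L}. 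Hence $\max_{i}\|M^{i}(\varphi_{1}-\varphi_{2}\circ g)\|_{\infty}\leq 2\|\varphi_{1}-\varphi_{2}\circ g\|_{\infty}$.

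Since $g\in G$ and $M$ are arbitrary, taking the infimum over $g$ and the supremum over $M$ gives
\[\sup_{M}d_{I}(V(M(\varphi_{1})),V(M(\varphi_{2})))\leq 2\inf_{g\in G}\|\varphi_{1}-\varphi_{2}\circ g\|_{\infty},\]
which is further bounded by $2D_{\varPhi}(\varphi_{1},\varphi_{2})$ on taking $g=\mathrm{id}$. The landscape bound on the left is then immediate from the stability $d_{\lambda}^{(\infty)}\leq d_{I}$ established in the second theorem cited from \cite{Oli-2020}. There is no serious obstacle: the only point that requires mild care is that the ``worst'' coordinate governs the overall bound, so even a single DGENEO component forces the constant $2$, which is why Theorem \ref{mul-G}'s constant $1$ cannot be recovered in general. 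Everything else is a direct splicing of the arguments used in Theorem \ref{mul-G} and Lemma \ref{mul-L}.
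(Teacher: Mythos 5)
Your argument is correct and is essentially the paper's own proof spelled out in full: the paper simply says the corollary follows as in Lemma \ref{mul-L} by invoking Definition \ref{mul}, and your coordinate-wise case split (GENEO coordinates giving constant $1$, DGENEO coordinates giving constant $2$, with the max over coordinates forcing the overall constant $2$) is exactly the intended splicing of Theorem \ref{mul-G} and Lemma \ref{mul-L}. No discrepancy to report.
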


\begin{proof}
	As the same of the proof in Lemma  \ref{mul-L}, we can use the Definition \ref{mul} to get the conclusion. 
\end{proof}

\subsection{Representation on Discrete Function Spaces}

Similar to the representation on discrete function spaces of 1-parameter GENEO construction in \cite{Bergo-2019}, we can construct filtrations of multi-GENEO. Let $\{\sigma^{i}\}_{i=1}^{n}$ be a sequence of positive numbers and  $\{\tau^{i}\}_{i=1}^{n}$ be a sequence of real numbers.  We consider the $\{g_{\tau^{i}}\}_{i=1}^{n}$ for each $g_{\tau^{i}}:\mathbb{R}\rightarrow\mathbb{R}$ is a 1-dimensional Gaussian function with width $\sigma^{i}$ and center $\tau^{i}$,
$$g_{\tau^{i}}(t):=\exp\left\{{-\frac{t-\tau^{i}}{2({\sigma^{i}})^2}}\right\}.$$

For a positive integer k, let set $S$ be the set of such $2k$-tuples $(a_{1},\tau_{1},...,a_{k},\tau_{k})\in{\mathbb{R}^{2k}}$ that $\sum_{j=1}^k{a_{j}^{2}}=\sum_{j=1}^k{\tau_{j}^{2}}$. Let $p=(p_{1},...,p_{n})$ and $p_{i}=(a^{i}_{1},\tau^{i}_{1},...,a^{i}_{k},\tau^{i}_{k})\in{S}$ for $i=1,...,n$. Define the function $G_{p}=(G_{p}^{1},...,G_{p}^{n})$ by 
$$G^{i}_{p}(x,y):=\sum_{j=1}^k{a_{j}g_{\tau^{i}_{j}} (\sqrt{x^{2}+y^{2} })}.$$

Define the convolutional operator $F^{i}_{p}$ as follows. For each continuous map $\varphi:\mathbb{R}^{2}\rightarrow\mathbb{R}$ with compact support, $F^{i}_{p}(\varphi):\mathbb{R}^{2}\rightarrow\mathbb{R}$ is the continuous map with compact support in the following form,
$$
F^{i}_{p}(\varphi)(x,y):=\int_{\mathbb{R}^{2}}\varphi(\alpha,\beta)\frac{G^{i}_{p}(x-\alpha,y-\beta)}{{\Vert}{G^{i}_{p}{\Vert}_{L^{1}}}}d\alpha d\beta.
$$
Then the operator $F^{i}_{p}$ is a GENEO with respect to the group $I$ of Euclidean plane isometries. One can see that $\{F^{i}_{p}(\varphi)\}_{i=1}^{n}$ contributes to a filtration of multi-GENEO. Then by Definition \ref{mul}, $L^{i}(\varphi)=F^{1,i}_{p}(\varphi)-F^{2,i}_{p}(\varphi)$ for $i=1,...,n$, we could also get the filtration of multi-DGENEO and mix-GENEO.

\section{Experiments}
In this section, we aim to demonstrate the effectiveness of our method in previous  paper. We will use GENEO and DGENEO to extract multiparameter filtration from partial and complete MNIST dataset, and we will use the tool RIVET and multiparameter persistence landscape to represent the rank invariants of the multiparameter persistence module. To construct comparable experiments, we use Dionysus to build lower-star and upper-star filtrations on images and use persistence images to vectorize their persistence diagrams.

RIVET is used to provide the corresponding results in biparameter persistence module. RIVET software can compute and visualize three such kinds of invariants, the Hilbert function, the bigraded Betti numbers, and the fibered barcode. RIVET supports the fast computation of multigraded Betti numbers and an interactive visualisation for 2-parameter persistence modules. RIVET approximates multiparameter modules with a discretization in order to reduce computational cost. These approximations can be taken to arbitrary accuracy with respect to the interleaving distance. Details of the time and space complexity of the algorithm may be found in \cite{zomorodian2004computing, Micheal-Mattew-2015, RIVET-2020}.

Multiparameter persistence landscapes are stable with respect to the interleaving distance and persistence weighted Wasserstein distance which can be found in \url{https://github.com/OliverVipond/Multiparameter_Persistence_Landscapes/}.  \cite{Oli-2020} provided statistical tests to demonstrate their potential applications of landscapes.

Dionysus is written in  C++, with python bindings, which provides various algorithms with clean and consistent internal design for computing persistent homology, which can be founded in \url{https://mrzv.org/software/dionysus2/}.  This package is useful to build  lower-star and upper-star filtrations of the Freudenthal triangulation on a grid.

Persistence Images is a useful  and stable vector representation of persistence diagrams, which  is proposed in \cite{Adams-2017}. Specifically, it  allows users to assign a weight on each point in a persistence diagram, and provide an efficient and easily understandable approach to vectorize persistence diagrams for machine learning tasks. We using Python packages from \url{https://persim.scikit-tda.org/} to compute persistence images.

All experiments were run on a laptop with an AMD Ryzen 7 5800H with Radeon Graphics and 16GB of memory.

\subsection{Generating Bifiltrations on Digital Images}

In this subsection, we will provide an algorithm to generate biparameter filtrations on digital images, which is also suitable for n-parameter filtrations. We give an example to show how generating biparameter filtration on digital images.

There have been several methods to construct cubical complexes.
\cite{Rob-2011} represented the voxels as vertices of the cubical complexes, and then \cite{Bea-2022} used this method to build cubical complexes from an image $\varphi:X\rightarrow \mathbb{R}$.  \cite{Dion-2012} built lower-star and upper-star filtrations of the Freudenthal triangulation on a grid in Dionysus . Inspired by their contributions, we build simplicial complex from two images $\varphi_{1}$, $\varphi_{2}\in{\mathcal{\varPhi}}$ by considering a unit square as two 2-simplices. 

Recall that such a grayscale image is a function $\varphi:X\rightarrow \mathbb{R}$, where $X\subset \mathbb{Z}^{2}$ is typically a rectangular subset of the discrete lattice
$$X=\{(m,n)~|~0\leq m\leq M,0\leq n\leq N\}.$$
A point $(m,n)\in X$ is called a pixel and the value $\varphi(x)\in \mathbb{R}$ is the grayscale value of $x$. The pixels $x\in X$ are the vertices (0-cells) of the complex.
If two vertices whose coordinates differ by one in a single axis, then the edge with endpoints of the two vertices is one 1-simplex. If four vertices form a unit square, then the edge with endpoints located in the upper left and the lower right is also one 1-simplex. And then, the unit square is divided into two 2-simplices. An example is given in Figure \ref{e_f_ex}.

\begin{figure}[h]
	\qquad  
	\begin{minipage}{0.3\linewidth}
		\vspace{3pt}
		\centerline{\includegraphics[height=1.7cm,width=1.7cm]{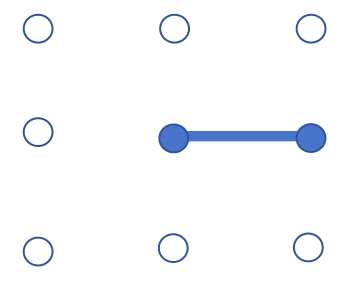}}
	\end{minipage}
	\qquad \quad \quad \ \ 
	\begin{minipage}{0.3\linewidth}
		\vspace{3pt}
		\centerline{\includegraphics[height=1.7cm,width=1.7cm]{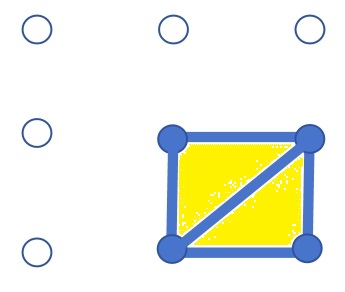}}
		
	\end{minipage}
	
	\caption{The solid dots represent vertices that have already appeared. There is one edge with two endpoints in the left figure and there are two 2-simplices colored in yellow.}
	\label{e_f_ex}
\end{figure}

Suppose that two grayscale digital images $\varphi_{1}$ and $\varphi_{2}$ are represented by the following two matrices 
$$
\begin{bmatrix} 
	7 & 5 & 3 \\
	8 & 6 & 9 \\
	1 & 4 & 2
\end{bmatrix} \ \ \ \ \quad \ \ \text{and}	\ \ \ \ \quad \ \ \ 
\begin{bmatrix} 
	3 & 2 & 7 \\
	4 & 9 & 8 \\
	5 & 6 & 1
\end{bmatrix} .
$$ 
Then we use nine letters from $a$ to $i$ to mark the nine vertices as follows,
$$
\begin{matrix}
	\bullet & \bullet & \bullet \\
	\bullet & \bullet & \bullet \\
	\bullet & \bullet & \bullet
\end{matrix} \qquad \qquad \qquad \qquad \ \ \ \ \ 
\begin{matrix}
	g & h & i \\
	d & e & f \\
	a & b & c
\end{matrix}
$$
By taking sublevelset filtration, a bifiltration could be shown in Figure \ref{fig:myfigure}. The complexes in the position $(p,q)$ are generated by the pixels $x$ which satisfying
$\varphi_{1}(x)\leq p \ \text{and} \  \varphi_{2}(x)\leq q.$   
Notice that the 0-simplices in the position $(4,6)$ are $a$, $b$ and $c$, the 1-simplices are $ab$ and $bc$. The simplices $b$, $ab$ and $bc$  first appear in the position.
Call $(p,q)$ the birth coordinate of them.
\begin{figure}[htbp]
	\centering
	\includegraphics[width=0.5\textwidth]{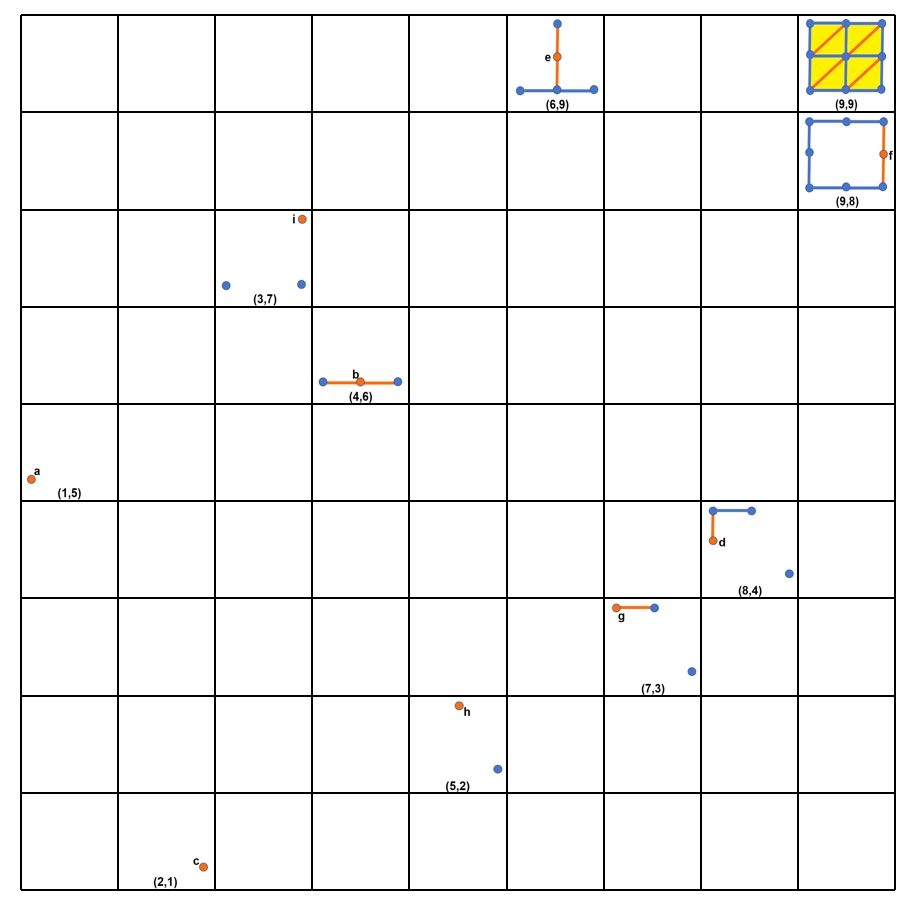}
	\caption{Bifiltration Example. The figure records the birth coordinates of vertices, edges and faces. The vertices and edges in the birth coordinates are colored in orange, the faces in the birth coordinates are colored in yellow, the rest are colored in blue.}
	\label{fig:myfigure}
\end{figure}

Furthermore, we can use RIVET to visualize the biparameter persistence modules of 0-dimensional and 1-dimensional homology in Figure \ref{fig4}. For details of basic persistent homology, we refer to \cite{zomorodian2004computing, zomorodian2005topology, botnan2022introduction}.

\begin{figure}[h]
	\ 
	\begin{minipage}{0.45\linewidth}
		\vspace{3pt}
		\centerline{\includegraphics[height=2.5cm,width=2.5cm]{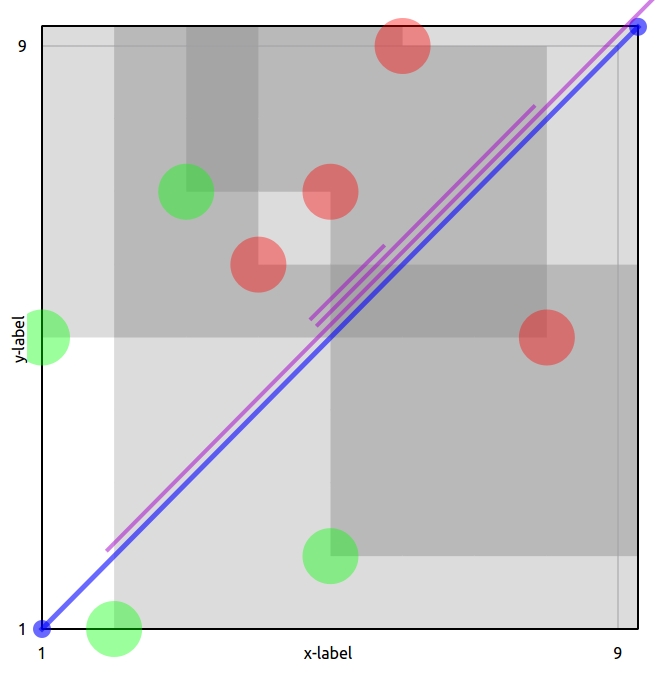}}
		\centerline{$\mathscr{B}_{H(X)^{0}}$}
	\end{minipage}
	\quad 
	\begin{minipage}{0.45\linewidth}
		\vspace{3pt}
		\centerline{\includegraphics[height=2.5cm,width=2.5cm]{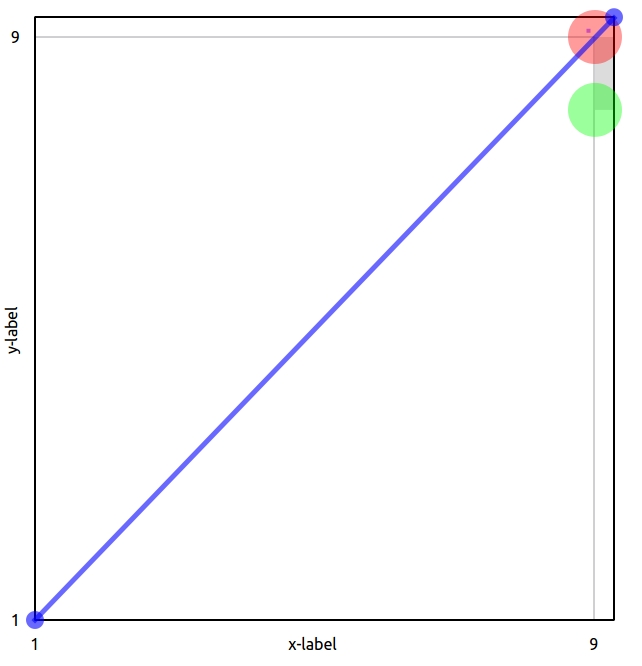}}
		
		\centerline{$\mathscr{B}_{H(X)^{1}}$}
	\end{minipage}
	
	\caption{$\mathscr{B}_{H(X)^{0}}$ reprensents the $H_{0}$ multiparameter persistence module, $\mathscr{B}_{H(X)^{1}}$ reprensents the $H_{0}$ multiparameter persistence module. One can see 1-loop in $\mathscr{B}_{H(X)^{1}}$ only birth at the coordinate (9,8) and persist to the coordinate (9,9).}
	\label{fig4}
\end{figure}

Notice that the bifiltration is a one-critical multifiltration defined in \cite{Gunnar-2010} since each cell of the multifilter complex has a unique critical coordinate.

\begin{algorithm}[h]{\tiny}
	
	\renewcommand{\algorithmicrequire}{\textbf{Input:}}
	\renewcommand{\algorithmicensure}{\textbf{Output:}}
	\caption{Build bifiltration}
	\label{BD}
	\begin{algorithmic}
		\Require $\mathcal{V}$, vertex ; 
		\Require $\varphi$, image;
		\Require $M = (M^{1},M^{2})$, mix-GENEO;
		\Ensure $\mathscr{F}=(\mathscr{F}_{x},\mathscr{F}_{y})$, bifiltration at $(x,y)$; 
		
		\State {$\psi_{1}=M^{1}(\varphi)$, $\psi_{2}=M^{2}(\varphi)$;}	
		\State {$\mathscr{F}\leftarrow$empty;}
		\For {$v\in{\mathcal{V}}$};
		\State {$(\mathscr{F}_{vx},\mathscr{F}_{vy})=(\psi_{1}(v), \psi_{2}(v))$;}
		\State {$\mathscr{F}\leftarrow{\mathscr{F} \bigcup (\mathscr{F}_{vx},\mathscr{F}_{vy})}$;}
		\EndFor
		
		\State {$\mathcal{E}\leftarrow$empty;}
		\If {$v_{i}$ is adjacent to $v_{j}$}
		\State {$\mathcal{E}\leftarrow{\mathcal{E}\bigcup \{e_{ij}\}};$}
		\EndIf
		
		\For {$e_{ij}\in{\mathcal{E}}$};
		\State {$(\mathscr{F}_{e_{ij}x},\mathscr{F}_{e_{ij}y})$\\
			$=(\max(\mathscr{F}_{v_{i}x},\mathscr{F}_{v_{j}x}), \max(\mathscr{F}_{v_{i}y},\mathscr{F}_{v_{j}y}));$}
		\State {$\mathscr{F}\leftarrow{\mathscr{F} \bigcup (\mathscr{F}_{e_{ij}x},\mathscr{F}_{e_{ij}y})}$;}
		\EndFor
		
		\State {$\mathcal{F}\leftarrow$empty;}
		\If{four vertices $v_{i}$, $v_{j}$, $v_{k}$, $v_{s}$ form a square, and $e_{ik}$ is a diagonal line in the square with a fixed direction}
		\State {$\mathcal{F}\leftarrow{\mathcal{F}\bigcup f_{ijk}};$}
		\State {$\mathcal{F}\leftarrow{\mathcal{F}\bigcup f_{isk}};$}
		\EndIf
		
		\For {$f_{ijk}, f_{isk}\in{\mathcal{F}}$};
		\State {
			$(\mathcal{F}_{f_{ijk}x},\mathcal{F}_{f_{ijk}y})$\\$= \ \  (\max(\mathscr{F}_{v_{i}x},\mathscr{F}_{v_{j}x},\mathscr{F}_{v_{k}x},\mathscr{F}_{v_{s}x}$),}
		\State{ $\max(\mathscr{F}_{v_{i}y},\mathscr{F}_{v_{j}y},\mathscr{F}_{v_{k}y},\mathscr{F}_{v_{s}y}));$}
		\State
		{ $(\mathcal{F}_{f_{isk}x},\mathcal{F}_{f_{isk}y})$\\$= \ \ (\max(\mathscr{F}_{v_{i}x},\mathscr{F}_{v_{j}x},\mathscr{F}_{v_{k}x},\mathscr{F}_{v_{s}x}$),}
		\State{
			$\max(\mathscr{F}_{v_{i}y},\mathscr{F}_{v_{j}y},\mathscr{F}_{v_{k}y},\mathscr{F}_{v_{s}y}));$}
		\State {$\mathscr{F}\leftarrow{\mathscr{F} \bigcup (\mathscr{F}_{f_{ijk}x},\mathscr{F}_{f_{ijk}y}) \bigcup (\mathscr{F}_{f_{isk}x},\mathscr{F}_{f_{isk}y})}$;}				
		\EndFor

		\State \Return $\mathscr{F}$.
	\end{algorithmic}
\end{algorithm}

$\textbf{Complexity}$ We now explore the complexity of Algorithm \ref{BD}. Notice that the bifiltration we construct is all one-critical. One vertex is computed one time if it is seemed as 0-simplex or a vertex of a higher simplex. A vertex is a common vertex of at most six 1-simplices and six 2-simplices. The algorithm requires at most $O(13n)$ time for n vertices.

\subsection{Example Computations}

In this subsection, we will provide examples of calculating binary classification and ten-classification. And we compare the performance of lower-star filtration, upper-star filtration, multi-GENEO, multi-DGENEO and mix-GENEO in the classification tasks by vectorizations. We consider two types of binary classifications, one is studied by taking 500 samples from the MNIST dataset, the other one is studied by using the complete MNIST dataset.

\subsubsection{Comparison of 1-parameter filtrations, multi-GENEO, multi-DGENEO and mix-GENEO using binary classification}

We will give examples of multi-GENEO, multi-DGENEO and mix-GENEO persistence filtrations to validate the effectiveness of our multifiltrations on MNIST dataset. We compare the performances of lower-star filtration, upper-star filtration, multi-GENEO, multi-DGENEO and mix-GENEO persistence filtrations for binary classification and ten-classification. 1-parameter filtrations are vectorized by persistence images \cite{Adams-2017}, and 2-parameter filtrations are vectorized by multiparameter persistence landscapes \cite{Oli-2020}. One can see that mix-GENEO performs the best on partial MNIST dataset.


Suppose that a digital image is the bounded function $\varphi$. We select five GENEOs, $G_{0}$, $G_{1}$, $G_{2}$, $G_{3}$ and $G_{4}$, to get bifiltration $\{F_{p}^{i}(\varphi)\}_{i=1}^{2}$. Notice that $G_{0}$ can be seemed as a Gaussian blur, $G_{1}-G_{2}$ and $G_{3}-G_{4}$ which are called DOG can be seemed as Laplace operators approximately. Since identity $I$ is also a GENEO, we could build multi-GENEO filtration by $G_{0}$ and $I$ acting on $\varphi$. Multi-DGENEO fitration is built by $(G_{3}-G_{4})(\varphi)$ and $(G_{1}-G_{2})(\varphi)$, and mix-GENEO filtration is built by $G_{0}(\varphi)$ and $(G_{3}-G_{4})(\varphi)$. To make the parameters in RIVET and persistent landscape consistency, we resize the value of $F_{p}^{i}(\varphi)$ into $[0,255]$.

Considering the images of the numbers $\{0,1,3,6,9\}$, we perform 500 samples for each number according to the order of appearance in the MNIST dataset. For 1-parameter filtrations, we use Dionysus to build lower-star filtration and upper-star filtration from these samples.   For 2-parameter filtrations, we use RIVET to build our three multifiltrations. To make the operation faster, we use the parameter bin in RIVET equal to 10  which coarsen persistence module to obtain an algebraically simpler module.

For 1-parameter filtraions, we set the resolution of persistence image to be 5, the Gaussian sigma  to be 1 and the persistence range to be $(0,256)$. As well known, the persistence images is based on barcodes which the death time is greater than the birth time. For the barcode generated by upper-star filtration, the birth time is later than the death time, so we swapped the birth time and death time of the barcodes generated by upper filtration. For 2-parameter filtrations, we plot the average persistence landscape $\lambda(k,\boldsymbol{x})$ for $k=1$ in the parameter range $[0,255]^{2}$ of the five datasets and the complete MNIST dataset with stepsize $s=10$ for the $H_{0}$-modules and $H_{1}$-modules (See Figures \ref{fig:landh0}, \ref{fig:landh1},  \ref{fig:landmnisth0} and \ref{fig:landmnisth1}). Here the first landscape $\lambda(k,\boldsymbol{x})$ detects the parameter values for which the associated space has at least 1-homological features together with the persistence of those features. 

Figures \ref{fig:landh0}, \ref{fig:landh1},  \ref{fig:landmnisth0} and \ref{fig:landmnisth1} show that the $H_{1}$ of number $1$  is significantly different from numbers $s\in \{0,3,6,9\}$ since $1$ has different topological and geometric information. It is worthy to note that although the topological and geometric information of $6$ and $9$ are almost the same, we can also find significant differences between them.

All landscapes of numbers from 0 to 9 can be found in our github code.

\begin{figure}[htbp]
	\centering
	\includegraphics[width=0.5\textwidth]{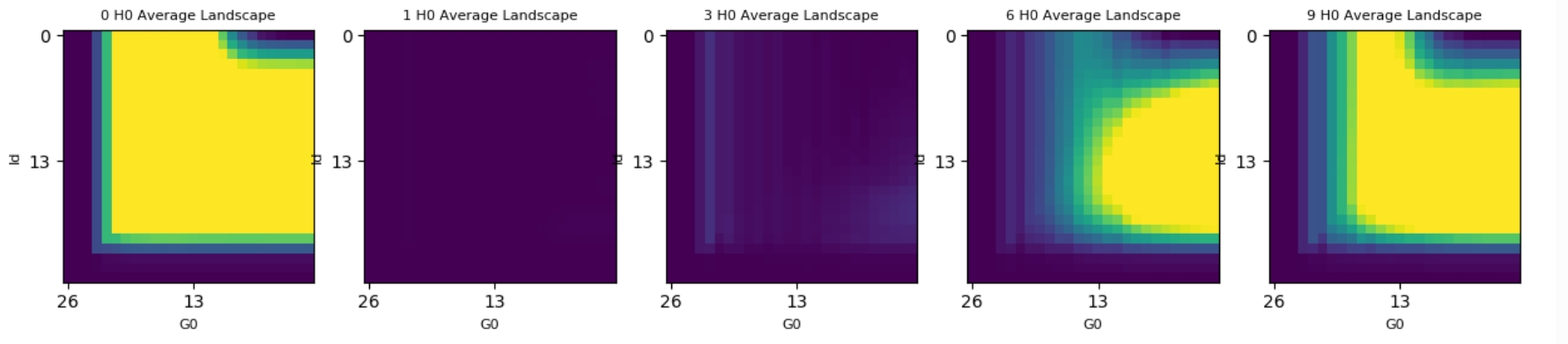}
	\caption{Multi-GENEO: Average Multiparameter Persistence Landscape for each number in $\{0,1,3,6,9\}$ by taking 500 samples from MNIST dataset ($H_{0}$).}
	\label{fig:landh0}
\end{figure}
\begin{figure}[htbp]
	\centering
	\includegraphics[width=0.5\textwidth]{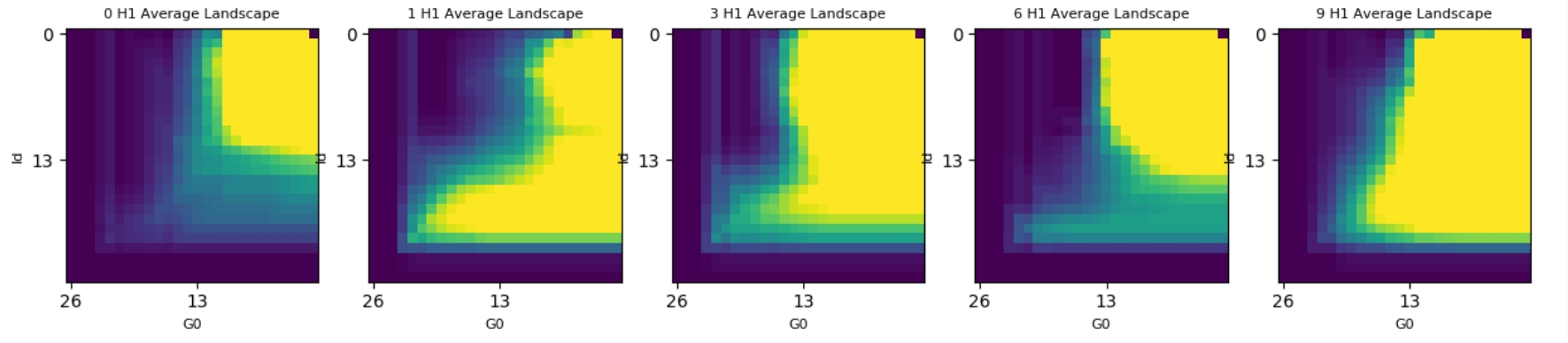}
	\caption{Multi-GENEO: Average Multiparameter Persistence Landscape for each number in $\{0,1,3,6,9\}$ by taking 500 samples from MNIST dataset ($H_{1}$).}
	\label{fig:landh1}
\end{figure}

\begin{figure}[htbp]
	\centering
	\includegraphics[width=0.5\textwidth]{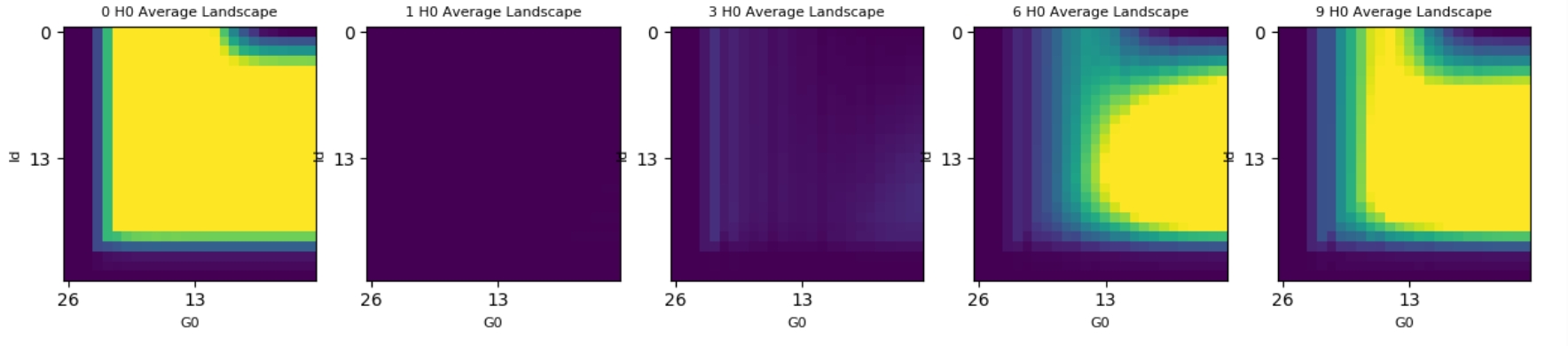}
	\caption{Multi-GENEO: Average Multiparameter Persistence Landscape for each number in $\{0,1,3,6,9\}$ in MNIST dataset ($H_{0}$).}
	\label{fig:landmnisth0}
\end{figure}
\begin{figure}[htbp]
	\centering
	\includegraphics[width=0.5\textwidth]{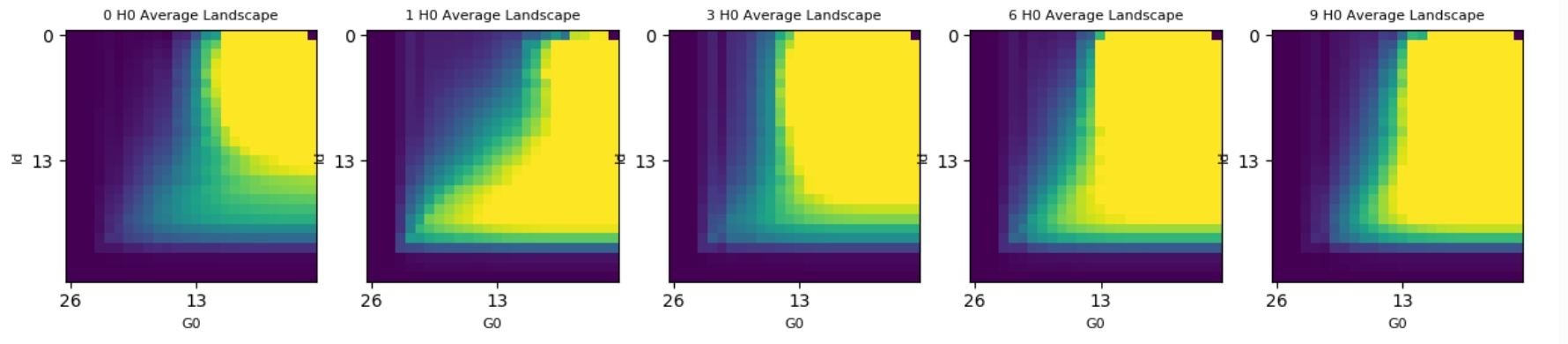}
	\caption{Multi-GENEO: Average Multiparameter Persistence Landscape for each number in $\{0,1,3,6,9\}$ in MNIST dataset ($H_{0}$).}
	\label{fig:landmnisth1}
\end{figure}

We use machine learning algorithms with multiparameter landscape functions and persistence images as a collection of features for a data set to learn non-linear relationships in our data set, and then we apply Principle Components Analysis (PCA) to the collection of $H_{0}$, $H_{1}$ and $(H_{0},H_{1})$ landscape vectors $\lambda(1,\boldsymbol{x})$ as well as persistence images to reduce parameter dimensions. The PCA projections make our methods better to verify the topological and geometric information of  the digital  dataset.

$\textbf{Dataset}$
The MNIST dataset is a classic dataset in the field of machine learning, consisting of 60000 training samples and 10000 test samples, each of which is a  $28\times28$  pixel grayscale handwritten digital image and represents a number from 0 to 9.

$\textbf{Results}$
We obtain the accuracies of binary classifications of 0 and 1 , 1 and 3 , 6 and 9 by persistent images of lower-star filtration and upper-star filtration, and by multiparamter persistent landscapes of multi-GENEO, multi-DGENEO and mix-GENEO, respectively. For 500 samples of each number, we perform 100 trials and average the classification accuracies. For the complete MNIST dataset, we use their train and test datasets for training and testing, respectively. More details of the results are provided in Table \ref{table1} and Table \ref{table2}. The accuracy of mix-GENEO of binary classification of 0 and 1, which have different topological information, can achieve $99.5\%$. The accuracy of mix-GENEO of binary classification of 6 and 9, which have almost the same topological and geometric information, can achieve $94.1\%$. The accuracy of mix-GENEO of binary classification of 1 and 3, which have different geometric information, can achieve $99.1\%$. Mix-GENEO is superior to 1-parameter filtrations. Therefore, our methods can significantly distinguish not only the ones with different topological information but also the ones with almost the same topological and geometric information. In our three methods,  multi-GENEO is suitable for $H_{0}$, multi-DGENEO is suitable for $H_{1}$ and mix-GENEO performs well for both $H_{0}$ and $H_{1}$. In general, mix-GENEO performs the best. In particular, persistence diagrams generated by lower-star filtration of the numbers $\{1,3\}$ are almost only have the trivial messages, which are $H_{0}$ $(0,+\infty]$ and $H_{1}$ $(0,256]$. 1-parameter filtrations cannot get enough signatures , but our methods make sense. 

\begin{table}[h]\tiny
	\centering
	
	\tabcolsep=0.1cm
	\begin{tabular}{@{}ccccccccccc@{}}

		\hline

		\multicolumn{2}{c}{\multirow{2}{*}{}}
		&\multicolumn{3}{c}{$H_{0}$}&\multicolumn{3}{c}{$H_{1}$}
		&\multicolumn{3}{c}{$H_{0}+H_{1}$}\\
		\cline{3-11}

		\multicolumn{2}{c}{}&  L & PL &PS &  L & PL &PS&  L & PL &PS\\
		\hline
		
		\multirow{5}{*}{$0vs1$} &lower-star & {92.0} & {91.5} & {95.8} & {61.7} & {61.9} & {67.7} & {84.0} & {81.7} & {93.0}\\
		\cline{2-11}
		
		\multirow{5}{*}{} &upper-star & {62.1} & {60.9} & {67.7} & {91.8} & {91.1} & {95.7} & {84.2} & {81.1} & {92.6}\\
		\cline{2-11}
		
		\multirow{5}{*}{} &mul-G & \textbf{97.9} & \textbf{98.1} & {98.6} & {56.6} & {55.7} & {57.4} & {97.3} & {96.5} & {97.6}\\
		\cline{2-11}
		
		\multirow{5}{*}{} &mul-D & {50.1} & {48.3} & {48.1} & {92} & {97.2} & {90.4} & {83.9} & {85.7} & {86.5}\\
		\cline{2-11}
		
		\multirow{5}{*}{} &mix-G & {97.5} & {96.8} & \textbf{99.1} & \textbf{98.5} & \textbf{98.1} & \textbf{98.7} & \textbf{98.2} & \textbf{99.1} & \textbf{99.3}\\
		\hline
		
		\multirow{5}{*}{$1vs3$} &lower-star & {57.7} & {57.2} & {59.1} & {63.5} & {63.5} & {66.9} & {64.4} & {61.9} & {68.2}\\
		\cline{2-11}
		
		\multirow{5}{*}{} &upper-star & {63.9} & {63.1} & {66.6} & {57.4} & {57.0} & {59.0} & {65.0} & {61.9} & {67.0}\\
		\cline{2-11}
		
		\multirow{5}{*}{} &mul-G & {68.9} & {68.5} & {70.4} & {63.4} & {63.9} & {66.2} & {71.9} & {73.2} & {74.8}\\
		\cline{2-11}
		
		\multirow{5}{*}{} &mul-D & {50.3} & {50.1} & {50.3} & {89.6} & {89.8} & {85.4} & {81.4} & {81.8} & {82.5}\\
		\cline{2-11}
		
		\multirow{5}{*}{} &mix-G & \textbf{93.3} & \textbf{92.6} & \textbf{94.3} & \textbf{95.7} & \textbf{95.7} & \textbf{96.8} & \textbf{95.7} & \textbf{96.9} & \textbf{97.6}\\
		\hline
		
		\multirow{5}{*}{$6vs9$} &lower-star & {52.7} & {50.5} & {56.2} & {49.5} & {49.4} & {52.5} & {52.5} & {51.6} & {56.0}\\
		\cline{2-11}
		
		\multirow{5}{*}{} &upper-star & {49.7} & {50.2} & {53.2} & {52.3} & {51.2} & {55.9} & {51.3} & {51.8} & {56.6}\\
		\cline{2-11}
		
		\multirow{5}{*}{} &mul-G & {64.8} & {61.2} & {69.8} & {52.4} & {56.5} & {51.5} & {63.5} & {68.2} & {69.7}\\
		\cline{2-11}
		
		\multirow{5}{*}{} &mul-D & {50} & {48.6} & {48.4} & {69.2} & \textbf{85.3} & \textbf{86.4} & {76.6} & {86.2} & {85.9}\\
		\cline{2-11}
		
		\multirow{5}{*}{} &mix-G & \textbf{76.6} & \textbf{79.7} & \textbf{75.7} & \textbf{73.3} & {80.4} & {82.1} & \textbf{85.8} & \textbf{87.3} & \textbf{88.7}\\
		\hline
	\end{tabular}
	\vspace{2mm}
	\caption{Binary classification of lower-star, upper-star, multi-GENEO, multi-DGENEO and mix-GENEO on '0~vs~1', '1~vs~3' and '6~vs~9' with each number of 500 examples using LDA, PCA+LDA, PCA+SVM.  In the first row, the following abbreviations are used: L=LDA, PL=PCA+LDA, PS=PCA+SVM. Bold indicates highest scores.}
	\label{table1}
\end{table}

\begin{table}[h]\tiny
	\centering
	
	\tabcolsep=0.1cm
	\begin{tabular}{@{}ccccccccccc@{}}
		
		\hline
		\multicolumn{2}{c}{\multirow{2}{*}{}}
		&\multicolumn{3}{c}{$H_{0}$}&\multicolumn{3}{c}{$H_{1}$}
		&\multicolumn{3}{c}{$H_{0}+H_{1}$}\\
		\cline{3-11}

		\multicolumn{2}{c}{}&  L & PL &PS &  L & PL &PS&  L & PL &PS\\
		\hline
		
		\multirow{5}{*}{$0vs1$} &lower-star & {93.7} & {94.9} & {96.2} & {71.9} & {73.6} & {72.1} & {92.3} & {95.2} & {95.7}\\
		\cline{2-11}
		
		\multirow{5}{*}{} &upper-star & {71.9} & {73.5} & {71.8} & {93.7} & {94.9} & \textbf{96.2} & {92.3} & {95.1} & {95.7}\\
		\cline{2-11}
		
		\multirow{5}{*}{} &mul-G & \textbf{99.1} & {94.9} & {98.8} & {63.3} & {63} & {63.9} & {99.1} & {97.3} & {98.7}\\
		\cline{2-11}
		
		\multirow{5}{*}{} &mul-D & {57.7} & {60.3} & {60.2} & {87.8} & {95.6} & {87.5} & {88.8} & {95.6} & {87.8}\\
		\cline{2-11}
		
		\multirow{5}{*}{} &mix-G & \textbf{99.1} & \textbf{98.3} & \textbf{99.3} & \textbf{96.2} & \textbf{96.2} & \textbf{96.2} & \textbf{99.5} & \textbf{99.2} & \textbf{99.4}\\
		
		\hline
		
		\multirow{5}{*}{$1vs3$} &lower-star & {63.0} & {62.1} & {62.0} & {71.8} & {71.8} & {70.7} & {72.8} & {73.4} & {73.1}\\
		\cline{2-11}
		
		\multirow{5}{*}{} &upper-star & {72.0} & {71.8} & {70.5} & {63.0} & {62.1} & {62.0} & {72.9} & {73.0} & {73.3}\\
		\cline{2-11}
		
		\multirow{5}{*}{} &mul-G & {72.6} & {70.0} & {72.3} & {68.2} & {67.4} & {67.6} & {69.4} & {66.9} & {70.3}\\
		\cline{2-11}
		
		\multirow{5}{*}{} &mul-D & {59.7} & {58.4} & {59.4} & {73.4} & {74.5} & {73.6} & {63.7} & {68.3} & {66.5}\\
		\cline{2-11}
		
		\multirow{5}{*}{} &mix-G & \textbf{95.8} & \textbf{95.1} & \textbf{96.2} & \textbf{84.3} & \textbf{82.6} & \textbf{84.2} & \textbf{98.7} & \textbf{98.3} & \textbf{99.1}\\
		
		\hline
		
		\multirow{5}{*}{$6vs9$} &lower-star & {56.7} & {57.8} & {57.8} & {52.6} & {53.4} & {52.1} & {55.5} & {59.0} & {60.7}\\
		\cline{2-11}
		
		\multirow{5}{*}{} &upper-star & {52.3} & {52.6} & {51.7} & {56.7} & {57.8} & {57.8} & {55.1} & {58.4} & {59.3}\\
		\cline{2-11}
		
		\multirow{5}{*}{} &mul-G & {67.8} & {67.1} & {68.6} & {53.9} & {52.9} & {52.4} & {78.5} & {75.7} & {79.0}\\
		\cline{2-11}
		
		\multirow{4}{*}{} &mul-D & {52.1} & {51.7} & {54.3} & {67.1} & \textbf{68.0} & \textbf{67.0} & {75.5} & {76.7} & {75.2}\\
		\cline{2-11}
		
		\multirow{4}{*}{} &mix-G & \textbf{82.9} & \textbf{76.8} & \textbf{83.9} & \textbf{68.3} & {66.2} & {66.6} & \textbf{93.4} & \textbf{91.7} & \textbf{94.1}\\
		
		\hline
	\end{tabular}
	\vspace{2mm}
	\caption{Binary classification results of lower-star, upper-star, multi-GENEO, multi-DGENEO and mix-GENEO for MNIST dataset using LDA, PCA+LDA, PCA+SVM. In the first row, the following abbreviations are used: L=LDA, PL=PCA+LDA, PS=PCA+SVM. Bold indicates highest scores.}
	\label{table2}
\end{table}

\newpage
\subsubsection{Comparison of 1-parameter filtrations, multi-GENEO, multi-DGENEO and mix-GENEO using ten-classification}
For the completeness of the experiment, we also carry out the same experiments on the entire MNIST dataset. 


$\textbf{Results}$ The accuracies of ten-classification are shown in Table \ref{table3}. One can see mix-GENEO performs best, it can effectively identify ten classes and achieve an accuracy of $78.8\%$. 

\begin{table}[h]\tiny
	\centering
	
	\tabcolsep=0.1cm
	\begin{tabular}{@{}cccccccccc@{}}

		\hline
		\multicolumn{1}{c}{\multirow{1}{*}{}}
		&\multicolumn{3}{c}{$H_{0}$}&\multicolumn{3}{c}{$H_{1}$}
		&\multicolumn{3}{c}{$H_{0}+H_{1}$}\\
		\cline{2-10}

		\multicolumn{1}{c}{}&  L & PL & PS &  L & PL & PS &  L & PL & PS \\
		\cline{1-10}
		
		\multicolumn{0}{c}{lower-star}  & {31.2} & {31.4} &{29.9} & {19.3} & {19.3} &{19.0} & {35.6} & {35.3} &{34.0}  \\
		\cline{1-10}
		
		\multicolumn{0}{c}{upper-star}  & {18.6} & {18.7} &{18.3} & \textbf{31.2} & {31.4} &{29.9} & {35.6} & {35.3} &{34.0}  \\
		\cline{1-10}
		
		\multicolumn{0}{c}{mul-G}  & \textbf{39.4} & {39.6} &{39.7} & {19.1} & {19.1} &{19.3} & \textbf{42.7} & {42.9} &{43.4}  \\
		\cline{1-10} 
		
		\multicolumn{0}{c}{mul-D}  & {9.8} & {14.2} &{14.2} & {30.3} & {32.1} &{29.4} & {9.8} & {33.6} &{31} \\
		\cline{1-10}
		
		\multicolumn{0}{c}{mix-G}  & {9.8} & \textbf{64.4} &\textbf{67.8} & {19.2} & \textbf{46.7} &\textbf{50.6} & {11.3} & \textbf{73} &\textbf{78.8} \\
		\cline{1-10}
		
	\end{tabular}
	\vspace{2mm}
	\caption{Ten-classification results of lower-star, upper-star, multi-GENEO, multi-DGENEO and mix-GENEO for MNIST dataset using LDA, PCA+LDA, PCA+SVM. In the first row, the following abbreviations are used: L=LDA, PL=PCA+LDA, PS=PCA+SVM. Bold indicates highest scores.}
	\label{table3}
\end{table}

\section{Conclusion and future work}
In this paper, we introduce three multiparameter persistence filtrations  called multi-GENEO, multi-DGENEO and mix-GENEO which can be chosen flexible. Moreover, we show the stability of both interleaving distance and multiparameter persistence landscape of multi-GENEO persistence module. We also provide estimations of upper bound for multi-DGENEO and mix-GENEO persistence module with respect to pseudometrics. After giving an algorithm to build the bifiltrations on digital images, the experiments we conduct demonstrate that our methods perform better 1-parameter filtrations, and demonstrate that our methods can significantly distinguish not only the ones with different topological information but also the ones with almost the same topological and geometric information.

In the future work, we would like to develop our methods in the following two aspects. On the one hand, we plan to optimatize our methods to get better results. For instance, we would obtain multiparameter filtrations by higher dimensional sublevelset functions or by selecting suitable operators in another way. On the other hand, we plan to apply our methods to other fields or problems, for instance, integrating features into deep learning and medical research. 




\backmatter

%
%
%

\bmhead{Acknowledgements}

We thank the editor and the reviewers for their dedicated time and constructive comments.

\section*{Declarations}


\begin{itemize}
\item Funding \\ This work was partially supported by the National Key R\&D Program of China (No. 2020YFA0714101).
\item Conflict of interest/Competing interests \\
The corresponding author declares that there are no conflicts of interest in this work.
\item Code availability \\
Our code is available at \url{https://github.com/HeJiaxing-hjx/Mix-GENEO/}.
\item Author contribution\\
These authors contributed equally to this work.
\end{itemize}

\bibliography{reference_abbr}


\begin{thebibliography}{51}
\ifx \bisbn   \undefined \def \bisbn  #1{ISBN #1}\fi
\ifx \binits  \undefined \def \binits#1{#1}\fi
\ifx \bauthor  \undefined \def \bauthor#1{#1}\fi
\ifx \batitle  \undefined \def \batitle#1{#1}\fi
\ifx \bjtitle  \undefined \def \bjtitle#1{#1}\fi
\ifx \bvolume  \undefined \def \bvolume#1{\textbf{#1}}\fi
\ifx \byear  \undefined \def \byear#1{#1}\fi
\ifx \bissue  \undefined \def \bissue#1{#1}\fi
\ifx \bfpage  \undefined \def \bfpage#1{#1}\fi
\ifx \blpage  \undefined \def \blpage #1{#1}\fi
\ifx \burl  \undefined \def \burl#1{\textsf{#1}}\fi
\ifx \doiurl  \undefined \def \doiurl#1{\url{https://doi.org/#1}}\fi
\ifx \betal  \undefined \def \betal{\textit{et al.}}\fi
\ifx \binstitute  \undefined \def \binstitute#1{#1}\fi
\ifx \binstitutionaled  \undefined \def \binstitutionaled#1{#1}\fi
\ifx \bctitle  \undefined \def \bctitle#1{#1}\fi
\ifx \beditor  \undefined \def \beditor#1{#1}\fi
\ifx \bpublisher  \undefined \def \bpublisher#1{#1}\fi
\ifx \bbtitle  \undefined \def \bbtitle#1{#1}\fi
\ifx \bedition  \undefined \def \bedition#1{#1}\fi
\ifx \bseriesno  \undefined \def \bseriesno#1{#1}\fi
\ifx \blocation  \undefined \def \blocation#1{#1}\fi
\ifx \bsertitle  \undefined \def \bsertitle#1{#1}\fi
\ifx \bsnm \undefined \def \bsnm#1{#1}\fi
\ifx \bsuffix \undefined \def \bsuffix#1{#1}\fi
\ifx \bparticle \undefined \def \bparticle#1{#1}\fi
\ifx \barticle \undefined \def \barticle#1{#1}\fi
\bibcommenthead
\ifx \bconfdate \undefined \def \bconfdate #1{#1}\fi
\ifx \botherref \undefined \def \botherref #1{#1}\fi
\ifx \url \undefined \def \url#1{\textsf{#1}}\fi
\ifx \bchapter \undefined \def \bchapter#1{#1}\fi
\ifx \bbook \undefined \def \bbook#1{#1}\fi
\ifx \bcomment \undefined \def \bcomment#1{#1}\fi
\ifx \oauthor \undefined \def \oauthor#1{#1}\fi
\ifx \citeauthoryear \undefined \def \citeauthoryear#1{#1}\fi
\ifx \endbibitem  \undefined \def \endbibitem {}\fi
\ifx \bconflocation  \undefined \def \bconflocation#1{#1}\fi
\ifx \arxivurl  \undefined \def \arxivurl#1{\textsf{#1}}\fi
\csname PreBibitemsHook\endcsname

\bibitem[\protect\citeauthoryear{Morozov}{}]{Dion-2012}
\begin{botherref}
\oauthor{\bsnm{Morozov}, \binits{D.}}:
Dionysus 2.
\url{https://mrzv.org/software/dionysus2/}
\end{botherref}
\endbibitem

\bibitem[\protect\citeauthoryear{Saul and Tralie}{2019}]{scikittda2019}
\begin{botherref}
\oauthor{\bsnm{Saul}, \binits{N.}},
\oauthor{\bsnm{Tralie}, \binits{C.}}:
Scikit-TDA: Topological Data Analysis for Python
(2019).
\doiurl{10.5281/zenodo.2533369}
\end{botherref}
\endbibitem

\bibitem[\protect\citeauthoryear{Bergomi et~al.}{2019}]{Bergo-2019}
\begin{barticle}
\bauthor{\bsnm{Bergomi}, \binits{M.G.}},
\bauthor{\bsnm{Frosini}, \binits{P.}},
\bauthor{\bsnm{Giorgi}, \binits{D.}},
\bauthor{\bsnm{Quercioli}, \binits{N.}}:
\batitle{Towards a topological--geometrical theory of group equivariant
  non-expansive operators for data analysis and machine learning}.
\bjtitle{Nature Machine Intelligence}
\bvolume{1}(\bissue{9}),
\bfpage{423}--\blpage{433}
(\byear{2019})
\end{barticle}
\endbibitem

\bibitem[\protect\citeauthoryear{Solomon and
  Bendich}{2024}]{solomon2024convolutional}
\begin{botherref}
\oauthor{\bsnm{Solomon}, \binits{Y.E.}},
\oauthor{\bsnm{Bendich}, \binits{P.}}:
Convolutional persistence transforms.
Journal of Applied and Computational Topology,
1--33
(2024)
\end{botherref}
\endbibitem

\bibitem[\protect\citeauthoryear{Blumberg and Lesnick}{2023}]{Blum-2023}
\begin{barticle}
\bauthor{\bsnm{Blumberg}, \binits{A.J.}},
\bauthor{\bsnm{Lesnick}, \binits{M.}}:
\batitle{Universality of the homotopy interleaving distance}.
\bjtitle{Trans. Amer. Math. Soc.}
\bvolume{376}(\bissue{12}),
\bfpage{8269}--\blpage{8307}
(\byear{2023})
\doiurl{10.1090/tran/8738}
\end{barticle}
\endbibitem

\bibitem[\protect\citeauthoryear{Crawley-Boevey}{2015}]{William-2015}
\begin{barticle}
\bauthor{\bsnm{Crawley-Boevey}, \binits{W.}}:
\batitle{Decomposition of pointwise finite-dimensional persistence modules}.
\bjtitle{J. Algebra Appl.}
\bvolume{14}(\bissue{5}),
\bfpage{1550066}--\blpage{8}
(\byear{2015})
\doiurl{10.1142/S0219498815500668}
\end{barticle}
\endbibitem

\bibitem[\protect\citeauthoryear{Bauer et~al.}{2020}]{Ulr-2020}
\begin{barticle}
\bauthor{\bsnm{Bauer}, \binits{U.}},
\bauthor{\bsnm{Botnan}, \binits{M.B.}},
\bauthor{\bsnm{Oppermann}, \binits{S.}},
\bauthor{\bsnm{Steen}, \binits{J.}}:
\batitle{Cotorsion torsion triples and the representation theory of filtered
  hierarchical clustering}.
\bjtitle{Adv. Math.}
\bvolume{369},
\bfpage{107171}--\blpage{51}
(\byear{2020})
\doiurl{10.1016/j.aim.2020.107171}
\end{barticle}
\endbibitem

\bibitem[\protect\citeauthoryear{Lesnick}{2015}]{Micheal-2015}
\begin{barticle}
\bauthor{\bsnm{Lesnick}, \binits{M.}}:
\batitle{The theory of the interleaving distance on multidimensional
  persistence modules}.
\bjtitle{Found. Comput. Math.}
\bvolume{15}(\bissue{3}),
\bfpage{613}--\blpage{650}
(\byear{2015})
\doiurl{10.1007/s10208-015-9255-y}
\end{barticle}
\endbibitem

\bibitem[\protect\citeauthoryear{Cohen-Steiner et~al.}{2006}]{David-2006}
\begin{bchapter}
\bauthor{\bsnm{Cohen-Steiner}, \binits{D.}},
\bauthor{\bsnm{Edelsbrunner}, \binits{H.}},
\bauthor{\bsnm{Morozov}, \binits{D.}}:
\bctitle{Vines and vineyards by updating persistence in linear time}.
In: \bbtitle{Computational Geometry ({SCG}'06)},
pp. \bfpage{119}--\blpage{126}.
\bpublisher{ACM},
\blocation{New York}
(\byear{2006}).
\doiurl{10.1145/1137856.1137877}
\end{bchapter}
\endbibitem

\bibitem[\protect\citeauthoryear{{The RIVET Developers}}{2020}]{RIVET-2020}
\begin{botherref}
\oauthor{\bsnm{{The RIVET Developers}}}:
RIVET.
\url{https://github.com/rivetTDA/rivet/}
(2020)
\end{botherref}
\endbibitem

\bibitem[\protect\citeauthoryear{Rabad{\'a}n and Blumberg}{2019}]{Blum-2019}
\begin{bbook}
\bauthor{\bsnm{Rabad{\'a}n}, \binits{R.}},
\bauthor{\bsnm{Blumberg}, \binits{A.J.}}:
\bbtitle{Topological Data Analysis for Genomics and Evolution: Topology in
  Biology}.
\bpublisher{Cambridge University Press},
\blocation{Cambridge}
(\byear{2019})
\end{bbook}
\endbibitem

\bibitem[\protect\citeauthoryear{Giunti and Lazovskis}{2021}]{Bar-2021}
\begin{botherref}
\oauthor{\bsnm{Giunti}, \binits{B.}},
\oauthor{\bsnm{Lazovskis}, \binits{J.}}:
{TDA-Applications (an online database of papers on applications of TDA outside
  of math)}.
\url{https://www.zotero.org/groups/2425412/tda-applications}
(2021)
\end{botherref}
\endbibitem

\bibitem[\protect\citeauthoryear{Br{\"u}el-Gabrielsson
  et~al.}{2020}]{Rick-2019}
\begin{bchapter}
\bauthor{\bsnm{Br{\"u}el-Gabrielsson}, \binits{R.}},
\bauthor{\bsnm{Nelson}, \binits{B.J.}},
\bauthor{\bsnm{Dwaraknath}, \binits{A.}},
\bauthor{\bsnm{Skraba}, \binits{P.}},
\bauthor{\bsnm{Guibas}, \binits{L.J.}},
\bauthor{\bsnm{Carlsson}, \binits{G.}}:
\bctitle{A topology layer for machine learning}.
In: \bbtitle{International Conference on Artificial Intelligence and
  Statistics},
pp. \bfpage{1553}--\blpage{1563}
(\byear{2020}).
\bcomment{PMLR}
\end{bchapter}
\endbibitem

\bibitem[\protect\citeauthoryear{Hofer et~al.}{2019}]{Hofer-2019}
\begin{bchapter}
\bauthor{\bsnm{Hofer}, \binits{C.}},
\bauthor{\bsnm{Kwitt}, \binits{R.}},
\bauthor{\bsnm{Niethammer}, \binits{M.}},
\bauthor{\bsnm{Dixit}, \binits{M.}}:
\bctitle{{Connectivity-Optimized Representation Learning via Persistent
  Homology}}.
In: \bbtitle{Proceedings of the 36th International Conference on Machine
  Learning}.
\bsertitle{Proceedings of Machine Learning Research},
vol. \bseriesno{97},
pp. \bfpage{2751}--\blpage{2760}
(\byear{2019}).
\bcomment{PMLR}.
\burl{https://proceedings.mlr.press/v97/hofer19a.html}
\end{bchapter}
\endbibitem

\bibitem[\protect\citeauthoryear{Hofer et~al.}{2017}]{Hofer-2017}
\begin{bchapter}
\bauthor{\bsnm{Hofer}, \binits{C.}},
\bauthor{\bsnm{Kwitt}, \binits{R.}},
\bauthor{\bsnm{Niethammer}, \binits{M.}},
\bauthor{\bsnm{Uhl}, \binits{A.}}:
\bctitle{Deep learning with topological signatures}.
In: \bbtitle{Advances in Neural Information Processing Systems},
vol. \bseriesno{30}.
\bpublisher{Curran Associates, Inc.},
\blocation{New York}
(\byear{2017}).
\burl{https://proceedings.neurips.cc/paper_files/paper/2017/file/883e881bb4d22a7add958f2d6b052c9f-Paper.pdf}
\end{bchapter}
\endbibitem

\bibitem[\protect\citeauthoryear{Love et~al.}{2023}]{love2023topological}
\begin{barticle}
\bauthor{\bsnm{Love}, \binits{E.R.}},
\bauthor{\bsnm{Filippenko}, \binits{B.}},
\bauthor{\bsnm{Maroulas}, \binits{V.}},
\bauthor{\bsnm{Carlsson}, \binits{G.}}:
\batitle{Topological convolutional layers for deep learning}.
\bjtitle{J. Mach. Learn. Res.}
\bvolume{24},
\bfpage{59}--\blpage{35}
(\byear{2023})
\doiurl{10.4995/agt.2023.17345}
\end{barticle}
\endbibitem

\bibitem[\protect\citeauthoryear{Hacquard
  et~al.}{2022}]{hacquard2022topologically}
\begin{barticle}
\bauthor{\bsnm{Hacquard}, \binits{O.}},
\bauthor{\bsnm{Balasubramanian}, \binits{K.}},
\bauthor{\bsnm{Blanchard}, \binits{G.}},
\bauthor{\bsnm{Levrard}, \binits{C.}},
\bauthor{\bsnm{Polonik}, \binits{W.}}:
\batitle{Topologically penalized regression on manifolds}.
\bjtitle{J. Mach. Learn. Res.}
\bvolume{23},
\bfpage{161}--\blpage{39}
(\byear{2022})
\end{barticle}
\endbibitem

\bibitem[\protect\citeauthoryear{Polterovich et~al.}{2020}]{Leo-2020}
\begin{bbook}
\bauthor{\bsnm{Polterovich}, \binits{L.}},
\bauthor{\bsnm{Rosen}, \binits{D.}},
\bauthor{\bsnm{Samvelyan}, \binits{K.}},
\bauthor{\bsnm{Zhang}, \binits{J.}}:
\bbtitle{Topological Persistence in Geometry and Analysis}.
\bsertitle{University Lecture Series},
vol. \bseriesno{74},
p. \bfpage{128}.
\bpublisher{American Mathematical Society},
\blocation{RI}
(\byear{2020})
\end{bbook}
\endbibitem

\bibitem[\protect\citeauthoryear{Oudot}{2015}]{Ste-2015}
\begin{bbook}
\bauthor{\bsnm{Oudot}, \binits{S.Y.}}:
\bbtitle{Persistence Theory: from Quiver Representations to Data Analysis}.
\bsertitle{Mathematical Surveys and Monographs},
vol. \bseriesno{209},
p. \bfpage{218}.
\bpublisher{American Mathematical Society, Providence},
\blocation{RI}
(\byear{2015}).
\doiurl{10.1090/surv/209}
\end{bbook}
\endbibitem

\bibitem[\protect\citeauthoryear{Adams et~al.}{2020}]{Adams-2019}
\begin{bchapter}
\bauthor{\bsnm{Adams}, \binits{H.}},
\bauthor{\bsnm{Aminian}, \binits{M.}},
\bauthor{\bsnm{Farnell}, \binits{E.}},
\bauthor{\bsnm{Kirby}, \binits{M.}},
\bauthor{\bsnm{Mirth}, \binits{J.}},
\bauthor{\bsnm{Neville}, \binits{R.}},
\bauthor{\bsnm{Peterson}, \binits{C.}},
\bauthor{\bsnm{Shonkwiler}, \binits{C.}}:
\bctitle{A fractal dimension for measures via persistent homology}.
In: \bbtitle{Topological Data Analysis: The Abel Symposium 2018},
pp. \bfpage{1}--\blpage{31}
(\byear{2020}).
\bcomment{Springer}
\end{bchapter}
\endbibitem

\bibitem[\protect\citeauthoryear{Schweinhart}{2020}]{Ben-2020}
\begin{barticle}
\bauthor{\bsnm{Schweinhart}, \binits{B.}}:
\batitle{Fractal dimension and the persistent homology of random geometric
  complexes}.
\bjtitle{Adv. Math.}
\bvolume{372},
\bfpage{107291}--\blpage{59}
(\byear{2020})
\doiurl{10.1016/j.aim.2020.107291}
\end{barticle}
\endbibitem

\bibitem[\protect\citeauthoryear{Bobrowski and Kahle}{2018}]{Bob-2018}
\begin{barticle}
\bauthor{\bsnm{Bobrowski}, \binits{O.}},
\bauthor{\bsnm{Kahle}, \binits{M.}}:
\batitle{Topology of random geometric complexes: a survey}.
\bjtitle{J. Appl. Comput. Topol.}
\bvolume{1}(\bissue{3-4}),
\bfpage{331}--\blpage{364}
(\byear{2018})
\doiurl{10.1007/s41468-017-0010-0}
\end{barticle}
\endbibitem

\bibitem[\protect\citeauthoryear{Schweinhart}{2021}]{Sch-2019}
\begin{barticle}
\bauthor{\bsnm{Schweinhart}, \binits{B.}}:
\batitle{Persistent homology and the upper box dimension}.
\bjtitle{Discrete Comput. Geom.}
\bvolume{65}(\bissue{2}),
\bfpage{331}--\blpage{364}
(\byear{2021})
\doiurl{10.1007/s00454-019-00145-3}
\end{barticle}
\endbibitem

\bibitem[\protect\citeauthoryear{Carlsson and Zomorodian}{2009}]{Carls-2009}
\begin{barticle}
\bauthor{\bsnm{Carlsson}, \binits{G.}},
\bauthor{\bsnm{Zomorodian}, \binits{A.}}:
\batitle{The theory of multidimensional persistence}.
\bjtitle{Discrete Comput. Geom.}
\bvolume{42}(\bissue{1}),
\bfpage{71}--\blpage{93}
(\byear{2009})
\doiurl{10.1007/s00454-009-9176-0}
\end{barticle}
\endbibitem

\bibitem[\protect\citeauthoryear{Bubenik}{2015}]{Peter-2015}
\begin{barticle}
\bauthor{\bsnm{Bubenik}, \binits{P.}}:
\batitle{Statistical topological data analysis using persistence landscapes}.
\bjtitle{J. Mach. Learn. Res.}
\bvolume{16},
\bfpage{77}--\blpage{102}
(\byear{2015})
\end{barticle}
\endbibitem

\bibitem[\protect\citeauthoryear{Vipond}{2020}]{Oli-2020}
\begin{barticle}
\bauthor{\bsnm{Vipond}, \binits{O.}}:
\batitle{Multiparameter persistence landscapes}.
\bjtitle{J. Mach. Learn. Res.}
\bvolume{21},
\bfpage{61}--\blpage{38}
(\byear{2020})
\end{barticle}
\endbibitem

\bibitem[\protect\citeauthoryear{Adams et~al.}{2017}]{Adams-2017}
\begin{barticle}
\bauthor{\bsnm{Adams}, \binits{H.}},
\bauthor{\bsnm{Emerson}, \binits{T.}},
\bauthor{\bsnm{Kirby}, \binits{M.}},
\bauthor{\bsnm{Neville}, \binits{R.}},
\bauthor{\bsnm{Peterson}, \binits{C.}},
\bauthor{\bsnm{Shipman}, \binits{P.}},
\bauthor{\bsnm{Chepushtanova}, \binits{S.}},
\bauthor{\bsnm{Hanson}, \binits{E.}},
\bauthor{\bsnm{Motta}, \binits{F.}},
\bauthor{\bsnm{Ziegelmeier}, \binits{L.}}:
\batitle{Persistence images: A stable vector representation of persistent
  homology}.
\bjtitle{Journal of Machine Learning Research}
\bvolume{18}(\bissue{8}),
\bfpage{1}--\blpage{35}
(\byear{2017})
\end{barticle}
\endbibitem

\bibitem[\protect\citeauthoryear{Carrie\`re and Blumberg}{2020}]{Math-2020}
\begin{bchapter}
\bauthor{\bsnm{Carrie\`re}, \binits{M.}},
\bauthor{\bsnm{Blumberg}, \binits{A.}}:
\bctitle{Multiparameter persistence image for topological machine learning}.
In: \bbtitle{Advances in Neural Information Processing Systems},
vol. \bseriesno{33},
pp. \bfpage{22432}--\blpage{22444}.
\bpublisher{Curran Associates, Inc.},
\blocation{New York}
(\byear{2020}).
\burl{https://proceedings.neurips.cc/paper_files/paper/2020/file/fdff71fcab656abfbefaabecab1a7f6d-Paper.pdf}
\end{bchapter}
\endbibitem

\bibitem[\protect\citeauthoryear{Bubenik et~al.}{2020}]{Peter-2019}
\begin{barticle}
\bauthor{\bsnm{Bubenik}, \binits{P.}},
\bauthor{\bsnm{Hull}, \binits{M.}},
\bauthor{\bsnm{Patel}, \binits{D.}},
\bauthor{\bsnm{Whittle}, \binits{B.}}:
\batitle{Persistent homology detects curvature}.
\bjtitle{Inverse Problems}
\bvolume{36}(\bissue{2}),
\bfpage{025008}--\blpage{23}
(\byear{2020})
\doiurl{10.1088/1361-6420/ab4ac0}
\end{barticle}
\endbibitem

\bibitem[\protect\citeauthoryear{Adams and Moy}{2021}]{Adams-2021}
\begin{barticle}
\bauthor{\bsnm{Adams}, \binits{H.}},
\bauthor{\bsnm{Moy}, \binits{M.}}:
\batitle{Topology applied to machine learning: From global to local}.
\bjtitle{Frontiers in Artificial Intelligence}
\bvolume{4},
\bfpage{668302}
(\byear{2021})
\end{barticle}
\endbibitem

\bibitem[\protect\citeauthoryear{Carlsson et~al.}{2008}]{Carls-2008}
\begin{barticle}
\bauthor{\bsnm{Carlsson}, \binits{G.}},
\bauthor{\bsnm{Ishkhanov}, \binits{T.}},
\bauthor{\bsnm{Silva}, \binits{V.}},
\bauthor{\bsnm{Zomorodian}, \binits{A.}}:
\batitle{On the local behavior of spaces of natural images}.
\bjtitle{Int. J. Comput. Vis.}
\bvolume{76}(\bissue{1}),
\bfpage{1}--\blpage{12}
(\byear{2008})
\doiurl{10.1007/s11263-007-0056-x}
\end{barticle}
\endbibitem

\bibitem[\protect\citeauthoryear{Chazal et~al.}{2011}]{Cha-2011}
\begin{barticle}
\bauthor{\bsnm{Chazal}, \binits{F.}},
\bauthor{\bsnm{Guibas}, \binits{L.J.}},
\bauthor{\bsnm{Oudot}, \binits{S.Y.}},
\bauthor{\bsnm{Skraba}, \binits{P.}}:
\batitle{Scalar field analysis over point cloud data}.
\bjtitle{Discrete Comput. Geom.}
\bvolume{46}(\bissue{4}),
\bfpage{743}--\blpage{775}
(\byear{2011})
\doiurl{10.1007/s00454-011-9360-x}
\end{barticle}
\endbibitem

\bibitem[\protect\citeauthoryear{Bobrowski et~al.}{2017}]{Bob-2017}
\begin{barticle}
\bauthor{\bsnm{Bobrowski}, \binits{O.}},
\bauthor{\bsnm{Mukherjee}, \binits{S.}},
\bauthor{\bsnm{Taylor}, \binits{J.E.}}:
\batitle{Topological consistency via kernel estimation}.
\bjtitle{Bernoulli}
\bvolume{23}(\bissue{1}),
\bfpage{288}--\blpage{328}
(\byear{2017})
\doiurl{10.3150/15-BEJ744}
\end{barticle}
\endbibitem

\bibitem[\protect\citeauthoryear{Phillips et~al.}{2015}]{Phil-2015}
\begin{bchapter}
\bauthor{\bsnm{Phillips}, \binits{J.M.}},
\bauthor{\bsnm{Wang}, \binits{B.}},
\bauthor{\bsnm{Zheng}, \binits{Y.}}:
\bctitle{Geometric inference on kernel density estimates}.
In: \bbtitle{31st International Symposium on Computational Geometry, SoCG 2015}
(\byear{2015})
\end{bchapter}
\endbibitem

\bibitem[\protect\citeauthoryear{Anai et~al.}{2020}]{Anai-2019}
\begin{bchapter}
\bauthor{\bsnm{Anai}, \binits{H.}},
\bauthor{\bsnm{Chazal}, \binits{M.} \bsuffix{F.and~Glisse}},
\bauthor{\bsnm{Ike}, \binits{Y.}},
\bauthor{\bsnm{Inakoshi}, \binits{H.}},
\bauthor{\bsnm{Tinarrage}, \binits{R.}},
\bauthor{\bsnm{Umeda}, \binits{Y.}}:
\bctitle{{DTM}-based filtrations}.
In: \bbtitle{Topological Data Analysis: The Abel Symposium 2018},
pp. \bfpage{33}--\blpage{66}
(\byear{2020}).
\bcomment{Springer}
\end{bchapter}
\endbibitem

\bibitem[\protect\citeauthoryear{Buchet et~al.}{2016}]{Buch-2015}
\begin{barticle}
\bauthor{\bsnm{Buchet}, \binits{M.}},
\bauthor{\bsnm{Chazal}, \binits{F.}},
\bauthor{\bsnm{Oudot}, \binits{S.Y.}},
\bauthor{\bsnm{Sheehy}, \binits{D.R.}}:
\batitle{Efficient and robust persistent homology for measures}.
\bjtitle{Comput. Geom.}
\bvolume{58},
\bfpage{70}--\blpage{96}
(\byear{2016})
\doiurl{10.1016/j.comgeo.2016.07.001}
\end{barticle}
\endbibitem

\bibitem[\protect\citeauthoryear{Chazal et~al.}{2011}]{Chaz-2011}
\begin{barticle}
\bauthor{\bsnm{Chazal}, \binits{F.}},
\bauthor{\bsnm{Cohen-Steiner}, \binits{D.}},
\bauthor{\bsnm{M\'erigot}, \binits{Q.}}:
\batitle{Geometric inference for probability measures}.
\bjtitle{Found. Comput. Math.}
\bvolume{11}(\bissue{6}),
\bfpage{733}--\blpage{751}
(\byear{2011})
\doiurl{10.1007/s10208-011-9098-0}
\end{barticle}
\endbibitem

\bibitem[\protect\citeauthoryear{Guibas et~al.}{2013}]{Gui-2013}
\begin{barticle}
\bauthor{\bsnm{Guibas}, \binits{L.}},
\bauthor{\bsnm{Morozov}, \binits{D.}},
\bauthor{\bsnm{M\'erigot}, \binits{Q.}}:
\batitle{Witnessed {$k$}-distance}.
\bjtitle{Discrete Comput. Geom.}
\bvolume{49}(\bissue{1}),
\bfpage{22}--\blpage{45}
(\byear{2013})
\doiurl{10.1007/s00454-012-9465-x}
\end{barticle}
\endbibitem

\bibitem[\protect\citeauthoryear{Blumberg et~al.}{2014}]{Blum-2014}
\begin{barticle}
\bauthor{\bsnm{Blumberg}, \binits{A.J.}},
\bauthor{\bsnm{Gal}, \binits{I.}},
\bauthor{\bsnm{Mandell}, \binits{M.A.}},
\bauthor{\bsnm{Pancia}, \binits{M.}}:
\batitle{Robust statistics, hypothesis testing, and confidence intervals for
  persistent homology on metric measure spaces}.
\bjtitle{Found. Comput. Math.}
\bvolume{14}(\bissue{4}),
\bfpage{745}--\blpage{789}
(\byear{2014})
\doiurl{10.1007/s10208-014-9201-4}
\end{barticle}
\endbibitem

\bibitem[\protect\citeauthoryear{Corbet et~al.}{2023}]{Re-2023}
\begin{barticle}
\bauthor{\bsnm{Corbet}, \binits{R.}},
\bauthor{\bsnm{Kerber}, \binits{M.}},
\bauthor{\bsnm{Lesnick}, \binits{M.}},
\bauthor{\bsnm{Osang}, \binits{G.}}:
\batitle{Computing the multicover bifiltration}.
\bjtitle{Discrete Comput. Geom.}
\bvolume{70}(\bissue{2}),
\bfpage{376}--\blpage{405}
(\byear{2023})
\doiurl{10.1007/s00454-022-00476-8}
\end{barticle}
\endbibitem

\bibitem[\protect\citeauthoryear{Edelsbrunner and Osang}{2021}]{Herb-2021}
\begin{barticle}
\bauthor{\bsnm{Edelsbrunner}, \binits{H.}},
\bauthor{\bsnm{Osang}, \binits{G.}}:
\batitle{The multi-cover persistence of {E}uclidean balls}.
\bjtitle{Discrete Comput. Geom.}
\bvolume{65}(\bissue{4}),
\bfpage{1296}--\blpage{1313}
(\byear{2021})
\doiurl{10.1007/s00454-021-00281-9}
\end{barticle}
\endbibitem

\bibitem[\protect\citeauthoryear{Blumberg and Lesnick}{2024}]{Blum-2022}
\begin{barticle}
\bauthor{\bsnm{Blumberg}, \binits{A.J.}},
\bauthor{\bsnm{Lesnick}, \binits{M.}}:
\batitle{Stability of 2-parameter persistent homology}.
\bjtitle{Found. Comput. Math.}
\bvolume{24}(\bissue{2}),
\bfpage{385}--\blpage{427}
(\byear{2024})
\doiurl{10.1007/s10208-022-09576-6}
\end{barticle}
\endbibitem

\bibitem[\protect\citeauthoryear{Chung et~al.}{2022}]{Yu-2022}
\begin{barticle}
\bauthor{\bsnm{Chung}, \binits{Y.M.}},
\bauthor{\bsnm{Day}, \binits{S.}},
\bauthor{\bsnm{Hu}, \binits{C.S.}}:
\batitle{A multi-parameter persistence framework for mathematical morphology}.
\bjtitle{Scientific reports}
\bvolume{12}(\bissue{1}),
\bfpage{6427}
(\byear{2022})
\end{barticle}
\endbibitem

\bibitem[\protect\citeauthoryear{Hatcher}{2002}]{Hatcher}
\begin{botherref}
\oauthor{\bsnm{Hatcher}, \binits{A.}}:
Algebraic topology,
544
(2002)
\end{botherref}
\endbibitem

\bibitem[\protect\citeauthoryear{Zomorodian}{2005}]{zomorodian2005topology}
\begin{bbook}
\bauthor{\bsnm{Zomorodian}, \binits{A.J.}}:
\bbtitle{Topology for Computing}.
\bsertitle{Cambridge Monographs on Applied and Computational Mathematics},
vol. \bseriesno{16},
p. \bfpage{243}.
\bpublisher{Cambridge University Press},
\blocation{Cambridge}
(\byear{2005}).
\doiurl{10.1017/CBO9780511546945}
\end{bbook}
\endbibitem

\bibitem[\protect\citeauthoryear{Zomorodian and
  Carlsson}{2005}]{zomorodian2004computing}
\begin{barticle}
\bauthor{\bsnm{Zomorodian}, \binits{A.}},
\bauthor{\bsnm{Carlsson}, \binits{G.}}:
\batitle{Computing persistent homology}.
\bjtitle{Discrete Comput. Geom.}
\bvolume{33}(\bissue{2}),
\bfpage{249}--\blpage{274}
(\byear{2005})
\doiurl{10.1007/s00454-004-1146-y}
\end{barticle}
\endbibitem

\bibitem[\protect\citeauthoryear{Lesnick and
  Wright}{2015}]{Micheal-Mattew-2015}
\begin{botherref}
\oauthor{\bsnm{Lesnick}, \binits{M.}},
\oauthor{\bsnm{Wright}, \binits{M.}}:
Interactive visualization of 2-d persistence modules.
arXiv preprint arXiv:1512.00180
(2015)
\end{botherref}
\endbibitem

\bibitem[\protect\citeauthoryear{Robins et~al.}{2011}]{Rob-2011}
\begin{barticle}
\bauthor{\bsnm{Robins}, \binits{V.}},
\bauthor{\bsnm{Wood}, \binits{P.J.}},
\bauthor{\bsnm{Sheppard}, \binits{A.P.}}:
\batitle{Theory and algorithms for constructing discrete morse complexes from
  grayscale digital images}.
\bjtitle{IEEE Transactions on pattern analysis and machine intelligence}
\bvolume{33}(\bissue{8}),
\bfpage{1646}--\blpage{1658}
(\byear{2011})
\end{barticle}
\endbibitem

\bibitem[\protect\citeauthoryear{Bleile et~al.}{2022}]{Bea-2022}
\begin{bchapter}
\bauthor{\bsnm{Bleile}, \binits{B.}},
\bauthor{\bsnm{Garin}, \binits{A.}},
\bauthor{\bsnm{Heiss}, \binits{T.}},
\bauthor{\bsnm{Maggs}, \binits{K.}},
\bauthor{\bsnm{Robins}, \binits{V.}}:
\bctitle{The persistent homology of dual digital image constructions}.
In: \bbtitle{Research in Computational Topology 2},
pp. \bfpage{1}--\blpage{26}.
\bpublisher{Springer},
\blocation{Cham}
(\byear{2022})
\end{bchapter}
\endbibitem

\bibitem[\protect\citeauthoryear{Botnan and
  Lesnick}{2022}]{botnan2022introduction}
\begin{botherref}
\oauthor{\bsnm{Botnan}, \binits{M.B.}},
\oauthor{\bsnm{Lesnick}, \binits{M.}}:
An introduction to multiparameter persistence.
arXiv preprint arXiv:2203.14289
(2022)
\end{botherref}
\endbibitem

\bibitem[\protect\citeauthoryear{Carlsson et~al.}{2010}]{Gunnar-2010}
\begin{barticle}
\bauthor{\bsnm{Carlsson}, \binits{G.}},
\bauthor{\bsnm{Singh}, \binits{G.}},
\bauthor{\bsnm{Zomorodian}, \binits{A.}}:
\batitle{Computing multidimensional persistence}.
\bjtitle{J. Comput. Geom.}
\bvolume{1}(\bissue{1}),
\bfpage{72}--\blpage{100}
(\byear{2010})
\doiurl{10.20382/jocg.v1i1a6}
\end{barticle}
\endbibitem

\end{thebibliography}

\end{document}